\newtheorem{theorem}{Theorem}
\newtheorem{lemma}{Lemma}
\newtheorem{assumption}{Assumption}
\def\BibTeX{{\rm B\kern-.05em{\sc i\kern-.025em b}\kern-.08em
    T\kern-.1667em\lower.7ex\hbox{E}\kern-.125emX}}
\begin{document}
\title{Epidemiological Model Calibration via Graybox Bayesian Optimization}
\author{Puhua Niu, 
Byung-Jun Yoon,~\IEEEmembership{Senior Member IEEE}, and Xiaoning Qian,~\IEEEmembership{Senior Member IEEE}
\thanks{
This work was supported in part by the U.S. Department of Energy (DOE) Office of Science under Award KJ0403010/FWP CC132. }
\thanks{Puhua Niu is with the Department of Electrical \& Computer Engineering, Texas
A\&M University, College Station, TX 77843 USA.}
\thanks{Byung-Jun Yoon is with the Department of Electrical \& Computer Engineering, Texas
A\&M University, College Station, TX 77843 USA; the Computational Science Initiative, Brookhaven National Laboratory, Upton, NY  11973 USA.}
\thanks{Xiaoning Qian is with the Department of Electrical \& Computer Engineering, Texas
A\&M University, College Station, TX 77843 USA; the Department of Computer science \& Engineering, Texas
A\&M University, College Station, TX 77843 USA;  the Computational Science Initiative, Brookhaven National Laboratory, Upton, NY  11973 USA (e-mail: xqian@ece.tamu.edu).}
}
%Technology, Boulder, CO 80305 USA (e-mail: author@boulder.nist.gov) 
%}
%\thanks{Second B. Author %Jr. was with Rice University, Houston, TX 77005 USA. He is 
%now with the Department of Physics, Colorado State University, Fort Collins, 
%CO 80523 USA (e-mail: author@lamar.colostate.edu).
%}
%\thanks{Third C. Author %is with 
%the Electrical Engineering Department, University of Colorado, Boulder, CO 
%80309 USA, on leave from the National Research Institute for Metals, 
%Tsukuba, Japan (e-mail: author@nrim.go.jp).

\maketitle

\begin{abstract}
 In this study, we focus on developing efficient calibration methods via Bayesian decision-making for the family of compartmental epidemiological models. The existing calibration methods usually assume that the compartmental model is \emph{cheap} in terms of its output and gradient evaluation, which may not hold in practice when extending
them to more general settings. Therefore,  we introduce model calibration methods based on a ``graybox'' Bayesian optimization (BO) scheme, more efficient calibration
for general epidemiological models.  This approach uses Gaussian processes as a surrogate to the expensive model, and 
leverages the functional structure of the compartmental model to enhance calibration performance. Additionally, we develop model calibration methods via a decoupled decision-making strategy for BO, which further exploits the decomposable nature of the functional structure.
The calibration efficiencies of the multiple proposed schemes are evaluated based on various data generated by a compartmental model mimicking real-world epidemic processes, and real-world COVID-19 datasets. 
Experimental results demonstrate that our proposed graybox variants of BO schemes can efficiently calibrate computationally \emph{expensive} models and further improve the calibration performance measured by the logarithm of mean square errors and achieve faster performance convergence in terms of BO iterations. We anticipate that the proposed calibration methods can be extended to enable fast calibration of more complex epidemiological models, such as the agent-based models.
\end{abstract}

\begin{IEEEkeywords}
Compartmental Model, Bayesian Optimization, Model Calibration, Gaussian Process, Knowledge Gradient
\end{IEEEkeywords}

\section{Introduction}
\label{sec:introduction}
\IEEEPARstart{P}{andemics} like the recent coronavirus disease (COVID-19) have shown enormous impacts on public health worldwide. The development of computational epidemiological models is crucial for gaining better quantitative understanding of the disease spread and enabling swift decision-making to design effective mitigation strategies. In fact, well-calibrated epidemiological models can serve as a useful guide for forecasting and quantifying the epidemic risk and implementing effective public health measures to control the spread of epidemic diseases~\cite{anastassopoulou2020data,bentout2020parameter}.  

% The compartmental models constitute an important population-based epidemiological model family
% , with the commonly adopted SIR (Susceptible-Infectious-Recovered) model as an example. 
% We focus on this family in this research while considering a new ``Quarantined'' state to describe the trajectory of infection in typical epidemics such as influenza and COVID-19. This SIQR model~\cite{odagaki2021exact} takes the form of an ODE (Ordinary Differential Equation) system that captures the general trends in a specified population. 
Compartmental models, such as the the commonly adopted SIR (Susceptible-Infectious-Recovered) model, constitute an important family of population-based epidemiological models. Models in this family take the form of Ordinary Differential Equation~(ODE) systems that capture the general trends and dynamics in the subpopulations represented by different compartments. In this work, we focus on this family of models while considering a new ``Quarantined'' state represented by an additional compartment. This new state aims at better mimicking the dynamic trajectories of the epidemic spreads,  typical in airborne diseases such as seasonal flu and COVID-19, resulting in a four-compartment SIQR model~\cite{odagaki2021exact}.

To accurately model real-world physical processes, it is crucial for computer models to fine-tune their model parameters based on observed data to capture inherent attributes of the physical processes, which can not be directly or easily measured by means of physical experiments. For instance, material properties, such as porosity and permeability are important computer inputs in computational material simulations, which cannot be measured directly in physical
experiments. In the applied mathematics and computational science literature, the methods used to identify those parameters are called model calibration techniques and the resulting parameters are called calibration parameters~\cite{kennedy2001bayesian,santner2003design}. The basic idea of calibration is to find the combination of the calibration parameters, under which the %computationally simulated outputs%
simulated computer outputs align with the observed physical data. %Model calibration helps improve the accuracy and reliability of the model-based predictions. 
A well-calibrated SIQR model may allow people to make more accurate predictions of pandemic risk and enable them to make better-informed decisions on how to mitigate.

The existing method often assumes that the computer model is \emph{cheap}~\cite{tuo2016theoretical}. This means that: the explicit form of the optimization objective $f(x)$ for calibration parameters $x$ is known and we can evaluate the first- or second-order derivatives of $f(x)$ with respect to~(w.r.t.) $x$. For an ODE system, deriving the explicit form of $f(x)$ and $\nabla_x f(x)$ can be infeasible when the ODE is non-linear. Recent advances show that sensitivity analysis allows us to evaluate the gradient $\nabla_x f(x) $ and perform the gradient-descent optimization method for the estimation of $x$~\cite{chen2018neural,ghosh2021variational}. Correspondingly, physics-informed machine learning~\cite{karniadakis2021physics} is drawing increasing attention as it integrates data and mathematical models into deep neural networks or other regression models by enforcing fundamental physical laws. 

In real-world scenarios when calibrating a computer simulation software, we are usually blind to the explicit forms of both $f(x)$ and $\nabla_x f(x)$ and we may be able to access only the output of $f(x)$. We classify this type of computer model as \emph{expensive}. Besides, even if we know the explicit form of $f(x)$ and $\nabla_x f(x)$, there are usually some key hyper-parameters $x'$, which are fixed during gradient-based optimization. Given $x$, the explicit form of $f_{x'}$ is intractable in most cases. Therefore, $f_{x'}$ can also be classified as \emph{expensive}.  Hence, it is valuable to pay attention to the calibration problem for such \emph{expensive} computer models. Bayesian Optimization~(BO) is one conventional approach for such scenario when evaluating $f(x)$ is expensive, where we use a surrogate model to approximate $f(x)$ and the optimization process is sequential. In each BO round, we make a \emph{Bayesian decision} so that some expected utility is maximized and then the corresponding output is queried from the computer model based on this decision to refine the surrogate model to guide the optimization of the objective function $f(x)$~\cite{garnett2023bayesian,frazier2018bayesian}.

This paper concentrates specifically on the calibration of the SIQR model based on BO and is organized as follows. First, we explain the configurations of the SIQR model and the model calibration setup. Then we introduce our calibration methods based on a ``graybox BO'' approach, instead of the usual ``blackbox BO'', where the experts' prior knowledge about the computer models is integrated into the BO formulation to improve its optimization performance. Finally, we explain how the graybox BO works for decoupled decision-making, accompanied by a new acquisition function. In our experiments, we evaluate the performance of our proposed graybox BO-based model calibration methods and compare them with traditional blackbox BO-based implementations to demonstrate the benefits of integrating prior model knowledge into the calibration process.

\section{Preliminaries}

\begin{figure*}[t]\vspace{-10mm}
\centering
\begin{minipage}[t]{0.48\linewidth}
\includegraphics[width=0.980\textwidth]{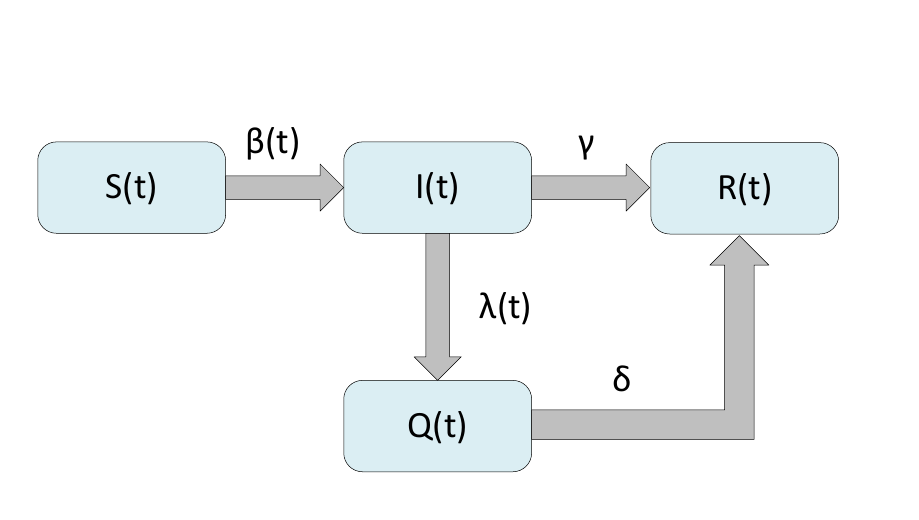}
\caption{Diagram depicting an SIQR compartmental epidemiological model. Each arrow indicates that the population rise of the ending compartment is caused by the population decrease of the starting compartment.}\label{digram}
\end{minipage}\quad
\centering
\begin{minipage}[t]{0.48\linewidth}
  \includegraphics[width=0.980\textwidth]{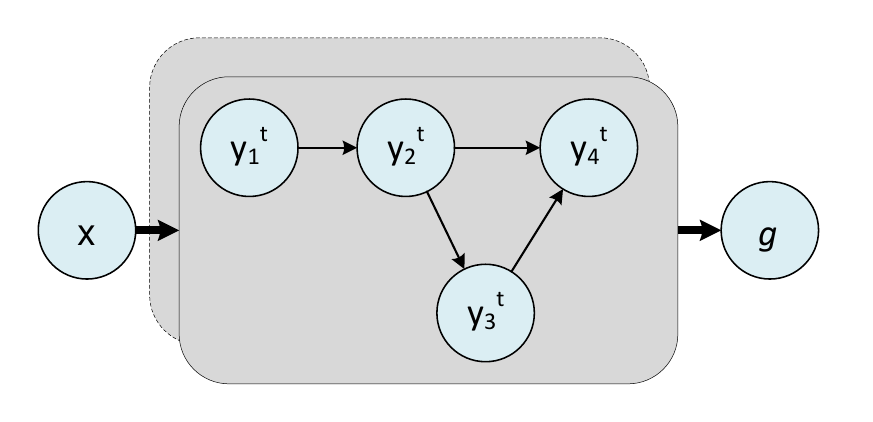}
  \caption{Function network structure of the SIQR model calibration, where the two bold arrows indicate that $x$ and $g$ are the parent node and child node of $y_{1:4}$ respectively. %\hl{do we need to change $g$ to $f$ to be consistent with the text?}
  }\label{func-network}
\end{minipage}
\end{figure*}
\subsection{SIQR model}

In this model, people in a given population are classified into four compartments: {\it Susceptible (S)}, {\it Infectious (I)}, {\it Recovered (R)}, and {\it Quarantined (Q)}. Those who have not been infected are considered susceptible. Transmission occurs between susceptible and infected individuals. The number of symptomatic patients, equivalent to the number of infected individuals, decreases through treatment and/or quarantine. Recovered individuals are assumed to remain immune to further infection. Therefore, all people will be in the recovered group after an adequate period. The dynamics of the SIQR model are given by the following set of ordinary differential equations~(ODEs)~\cite{odagaki2021exact}:
\begin{equation}
    \left\{\begin{matrix}
\nabla_t S(t)=&-\beta(t) S(t)\\\nabla_t I(t)= &\beta(t)S(t)-\lambda(t) I(t)-\gamma I(t)
\\
\nabla_t R(t)=&\gamma I(t)+\delta Q(t)
\\
\nabla_t Q(t)=&\lambda(t) I(t)-\delta Q(t)
\\
\end{matrix}\right.   
\end{equation}
where $S(t)$, $I(t)$, $Q(t)$ and $R(t)\in [0,N]$ are the populations of compartments `S', `I', `Q' and `R' respectively.%, and $N$ is the total population. Their dynamics are controlled by transmission rate $\beta$, quarantine rate of infected $\lambda$, and recovery rates $\gamma$ and $\delta$ of infected and quarantined compartments, respectively. \textcolor{blue}{Rates $\beta$ and $\lambda$ are functions of population and changes over time (e.g. $\beta(t)=0.9~I(t)/N$).}
It can be verified that the ODEs guarantee the conservation of the population, $S(t) + I(t) + Q(t) + R(t) =N$. A schematic illustration of such an SIQR model is shown in Figure~\ref{digram}. We note that there are many compartmental epidemiological  models based on different extensions of the original SIR model~\cite{brauer2019mathematical,piovella2020analytical}. While we focus on SIQR model calibration in this paper, the presented model calibration methods can be applied to all these variants in a straightforward manner. More importantly, calibration of more complicated epidemiology models based on agent-based models~(ABMs) can also leverage our proposed methods if their critical parameters can be identified, where calibration efficiency is even more important due to their significantly higher computational complexity compared to the ODE-based compartmental models~\cite{anirudh2022accurate,beyeler2020adaptive,aleta2021modeling}. 
% \hl{[citations of some ABM related work, preferably by RADIUM/EMERGE team members]}

\subsection{Model Calibration}

Most of the existing calibration methods~\cite{cox2001statistical,brown2018nonparametric,tuo2015efficient,tuo2023reproducing,sung2024efficient,sargsyan2015statistical} follow or adapt from the Bayesian approach pioneered by Kennedy and O'Hagan~(KOH)~\cite{kennedy2001bayesian}. In practice, we can only collect epidemiological data in the occurrence of an epidemic. In other words, we can observe the population's dynamic trajectory \emph{only once}, making model calibration more challenging than in other statistical inference or machine learning tasks.
Supposing one such trajectory with time length $T$ are observed, %one trial of physical experiments are conducted at $T$ design point and denote the observed response as $\{\mathbf{d}^t=[d^t_1,...,d^t_4]\}$ with $t\in [1,T]$, 
we denote the observed epidemiological data as $\left\{\mathbf{d}^t=(d^t_1,...,d^t_4)\right\}_{t=1}^{T}$ and make the following assumption:
\begin{equation}   \mathbf{d}^t=\zeta^t+\epsilon \approx \eta^t(x)+\epsilon,
\end{equation}
where $\zeta$ is the unknown ground-truth underlying epidemic process, $\eta$ is a computer model (e.g., SIQR model) that we would like to calibrate to simulate the behavior of $\zeta$ and fit the observed epidemiological data, $x\in \mathbb{R}^{D}$ is the $D$-dimensional \emph{calibration} parameter vector of $\eta$, and $\epsilon\sim \mathcal{N}(0,\mathrm{Diag}(\mathbf{1}))$ represents the observation noise. In this work, $\eta^t(x)=(\eta^t_1(x),\eta^t_2(x),\eta^t_3(x)),\eta^t_4(x))=(S(t;x),I(t;x),Q(t;x),R(t;x))$ corresponds to the output of the SIQR model with $x$ representing parameters of rate functions $\{\beta(t;x),\gamma(t;x),\delta(x),\gamma(x)\}$. %\hl{we may need to put in more descriptions if we want explain these parameters x's change with time... } 
By our assumption, $d^t|\eta^t(x)\sim \mathcal{N}(\eta^t(x),1)$, and the Maximum Log-likelihood Estimation (MLE) of $x$ corresponds to~\cite{sung2024efficient,tuo2016theoretical}:
\begin{equation}
\begin{split}
    x^\ast&=\mathrm{argmax}_x\sum_t^T g(\eta^t (x)),\\g(\eta^t (x))&:=- \frac{1}{T}~\sum_{i=1}^4\left(d^t_i-\mathbf{\eta}^t_i (x)\right)^2,     
\end{split}
\end{equation}
where $g$ denotes the likelihood function as the negative Square Error~(SE) function based on the Gaussian assumption. For notation compactness, we denote $g(\eta^t(x))$ as $f^t(x)$. The goal of model calibration is the maximization of the corresponding negative Mean Square Error (MSE), $\sum_t f^t(x)$.

\section{Methodology} 

Instead of following the traditional KOH Bayesian model calibration scheme~\cite{kennedy2001bayesian}, we resort to a new model calibration framework leveraging Bayesian Optimization~(BO)~\cite{garnett2023bayesian}. Under this framework, model calibration is formulated into the optimization of the correspondingly designed utility function, called \textit{acquisition function}.  In BO, Gaussian Processes~(GPs) have been typically used as cheap surrogates of $f(x)$\footnote{In the following text, we will consider the maximization of the acquisition function $f^t(x)$ instead of $\sum_t f^t(x)$ for notation compactness, and omit the subscript $t$, where it does not introduce ambiguity.}. In this way, we circumvent the difficulties in optimizing $f(x)$ directly. To be more specific, the standard BO approach treats $f(x)$ as a blackbox function and starts modeling it by a set of GP prior distributions, $\mathcal{N}(\mu(x),\Sigma(x))$. Then, it iteratively chooses the next point at which to evaluate $f$ with the procedure described as follows.  

Given $n$ observations of the objective function customized for model calibration, $f(x^{1}), \ldots, f(x^{n})$, we first infer the posterior distribution, $\mathcal{N}(\mu_n(x),\Sigma_n(x))$, which conditions on the $\{f(x^{l}),x^{l}\}_{l=1}^n$. Then, this posterior distribution is used to compute an acquisition function. Finally, we make a \emph{Bayesian decision} that chooses the next point to be evaluated, $x^{n+1}$, as the point that maximizes this acquisition
function. After $N$ iteration, the candidates $x^{best}=\mathrm{argmax}_x \mu_N(x)$ is the calibrated parameter choice.

\subsection{Graybox BO} 
It is clear that given the simulated dynamics $\eta(x)$, evaluation of the model calibration performance metric $g$ is cheap and does not need to be approximated. Instead of directly modeling $f(x)$ as a blackbox objective function as typically done in BO, for SIQR model calibration, we here first use a set of GPs, denoted by $\mathbf{y}(x)$, as the surrogate to $\eta(x)$.
With that, we then model $f(x)$ as a graybox objective, by a \textit{Composite Function~(CF)}~\cite{astudillo2019bayesian}:
\begin{equation}
        g(\mathbf{y}(x))=\frac{1}{T}\sum_i (d_i-y_i(x))^2.
\end{equation}
%With such an integration of model calibration specific composite function, we expect that it will further improve sample efficient for our graybox BO based model calibration framework.$ 
Spared from the modeling of $g$, the model calibration performance can be expected to be further improved. Furthermore, as shown in Figure~\ref{digram}, the workflow of the SIQR model encompasses a set of input-output structures between functions of each compartment. For example, infectious population `I' are exclusively transferred from susceptible population `S', and individuals in the recovery compartment `R' only come from compartments `I' and `Q'. Therefore, one may expect that leveraging these functional structures may lead to improved calibration performance. Integrating these relationships into the BO method gives rise to a {\it Function Network (FN)}, $\mathcal{G}(\mathcal{V},\mathcal{E})$, as shown in Figure~\ref{func-network}. In the graph, the set of nodes $\mathcal{V}=\{v|y_1,\ldots,y_4, x,g\}$ denote four GP sets associated with the four compartments `S', `I', `Q' and `R' respectively, calibration parameters $x$, and performance metric $g$. The graph also contains a set of edges $\mathcal{E}=\{(v\rightarrow v^\prime)| v,v^\prime \in \mathcal{V}\}$, where $(v\rightarrow v^\prime)$ means node $v^\prime$ takes the output of nodes $v$ as its input~\cite{astudillo2021bayesian}. Suppose we already have made queries from the computer model $\eta$ by simulating trajectories of  subpopulations in each compartment at different settings 
and obtain $\mathcal{D}_n=\left\{\left(\hat{\mathbf{y}}^l=\eta(x^l),~x^l\right)\right\}_{l=1}^n$ at $n$ calibration parameter vectors. We have:  
\begin{equation}
\begin{split}  &P_n(\mathbf{y}(x))=\prod_{i=1}^4 P_n(y_i(x,Pa_i)),\\&y_i\sim \mathcal{N}\left(\mu_{n}^{i}(x,Pa_i),\sigma_{n}^{i}(x,Pa_i)\right), 
\end{split}
\end{equation}
where $Pa_i=\{y_j: (y_j\rightarrow y_i)\in \mathcal{E} \}$ denotes the GPs that are the parent nodes of $y_i$ in graph $\mathcal{G}$, $P_{n}(y_i(x,Pa_i))=P\big(y_i(x,Pa_i)|\hat{\mathbf{y}}^{1:n}(x^{1:n})\big)$ denotes the posterior distribution of GP $y_i$ with mean $m_n^i$ and variance $\sigma_n^i$. Given $\mathcal{D}_n$, we choose the GP prior $P\left(y_i^{1:n}(x^{1:n},Pa_i^{1:n})\right)$ to be multivariate normal with a Matérn covariance kernel and a constant mean, whose hyper-parameters are obtained by MLE over  $P(\hat{y}_i^{1:n}(x^{1:n},\widehat{Pa}_i^{1:n}))$. Then we infer the posterior from $P(\hat{y}_i^{1:n}(x^{1:n},\widehat{Pa}_i^{1:n}), y_i(x,Pa_i))$~\cite{frazier2018tutorial}.  
 
Taking the Knowledge Gradient (KG)~\cite{frazier2018tutorial,frazier2008knowledge}, a typical formulation of acquisition functions as an example, we define the graybox acquisition function according to the FN as:
\begin{equation}
\begin{split}
    \alpha_n(x)&=\mathbb{E}_{P_{n}(\mathbf{y}(x))} \left[u_{n+1}^\ast(\mathbf{y},x)\right]-u_{n}^\ast%\\&=\sum_{i=1}^4\mathbb{E}_{P_{n}(y_{i}|x)} [u_{n+1,i}^\ast(y_{i},x)]-u_{n,i}^\ast,
\end{split}
\end{equation}
where $u^\ast_{n}=\max_{x} u_{n}(x)$, $u_{n}(x)=\mathbb{E}_{P_n(\mathbf{y}(x)) } \left[g(\mathbf{y})\right]$, $u_{n+1}^\ast(\mathbf{y}(x))=\max_{x'} u_{n+1}(x';\mathbf{y}(x))$, and $u_{n+1}(x';\mathbf{y}(x))=\mathbb{E}_{P_{n+1}\left(\mathbf{y}'(x');\mathbf{y}(x)\right) } \left[g(\mathbf{y}')\right]$. On the right-hand side of the formula above, the second term is the largest expected metric value based on the current GPs, and the first term is the largest expected metric value if the GPs are further conditioned on predicted data $(\mathbf{y}(x),x)$. In comparison, the expectations $\mathbb{E}_{P_n}[\mathbf{y}]$ and $\mathbb{E}_{P_{n+1}}[\mathbf{y}]$ are computed in the original KG.

Due to the maximization operator within the expectation, neither the original nor graybox versions of KG can be computed explicitly. Thus, we resort to Monte Carlo estimation{~\cite{andrieu2003introduction} of the expectations and the reparametrization trick~\cite{kingma2013auto}, so that $\hat{\alpha}$ is an unbiased estimation of $\alpha$ and $\nabla_x \hat{\alpha}$ is feasible. To be more specific, we estimate the acquisition function via:
\begin{equation*}
    \begin{split}
    &\hat{\alpha}(x)= \frac{1}{K}\sum_{k=1}^K\hat{u}_{n+1}^\ast\left(x,\hat{\mathbf{y}}_{(k)}(x)\right)-\hat{u}_{n}^\ast,\\& \hat{u}_{n+1}^\ast\left(x,\hat{\mathbf{y}}_{(k)}(x)\right)=\max_{x'}\frac{1}{L}\sum_{l=1}^L g\left(\hat{\mathbf{y}}'_{(l,k)}(x';x)\right),\\&\hat{u}_{n}^\ast=\max_{x''}\frac{1}{L}\sum_{l=1}^L g\left(\hat{\mathbf{y}}''_{(l)}(x'')\right),
    \end{split}
    \end{equation*}
    \begin{equation}
    \begin{split}
        &\hat{y}_{i,(k)}(x)=\mu_n^{i}\left(x,\widehat{Pa}_{i,(k)}(x)\right)+\sigma_n^{i}\left(x,\widehat{Pa}_{i,(k)}(x)\right) \hat{\epsilon}_{i}^{(k)}\\
                &\hat{y}'_{i,(l,k)}(x';x)=\mu_{n+1}^{i}\left(x',\widehat{Pa}'_{i,(l,k)}(x');\hat{\mathbf{y}}_{(k)}(x)\right)\\&\quad\quad\quad\quad\quad+\sigma_{n+1}^{i}\left(x',\widehat{Pa}'_{i,(l,k)}(x');\hat{\mathbf{y}}_{(k)}(x)\right) \hat{\epsilon}_{i}^{(l,k)}\\&\hat{y}''_{i,(l)}(x'')=\mu_n^{i}\left(x'',Pa''_{i,(l)}(x'')\right)+\sigma_n^{i}\left(x'',Pa''_{i,(l)}(x'')\right) \hat{\epsilon}_{i}^{(l)},
    \end{split}\label{KG-obj}
\end{equation}
where $\hat{\epsilon}_i$ denotes a sample of $\epsilon_i\sim \mathcal{N}(0,1)$, $K=8$, and $L=128$. The computational cost for inferring $P_n$ is about $\mathcal{O}(n^3)$ and does not scale up with the dimension of GPs' input~(i.e., $x$ or $[x,Pa_i])$~\cite{seeger2004gaussian}. Thus, the computational complexity of the original and graybox versions of KG are similar. The workflow of our calibration method based on graybox BO is illustrated in Figure~\ref{fig:sl} with the pseudo-code summarized in Algorithm \ref{work-flow-al}.
\begin{figure}
\hspace{-0mm}\includegraphics{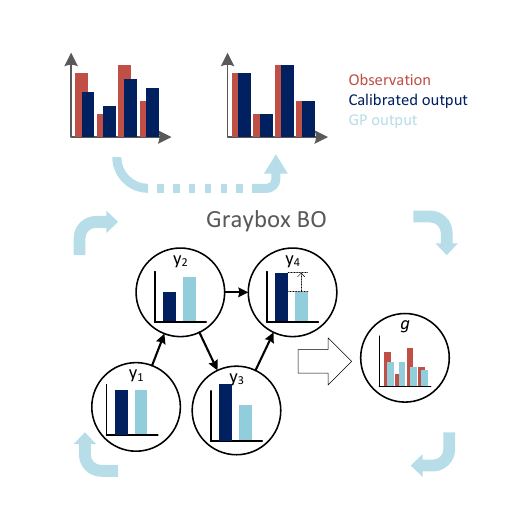}
    \caption{Schematic illustration of the model calibration workflow based on graybox BO, where we leverage expert knowledge about functional dependency and metric function.% \hl{maybe add the fn in figure 2 below circular arrow? also can you rotate the arrow a little to make the gap at -180 degree? also refer to the figure in the text please...}
    }
    \label{fig:sl}
\end{figure}
\subsection{Decoupled Decision-Making}

In practice, epidemiological data $\mathbf{d}$ from the unknown epidemic
process $\zeta$ may not be observed at synchronized time points. In other words, some observations at the corresponding time point may be missing in practice. For example, when $d=\{\varnothing,d_2,d_3,d_4\}$, where $d_1$ is missing in the observed data, we may only be able to compute $g(\mathbf{y}(x))=\frac{1}{T}\sum_{i\neq 1} (y_i(x)-d_i)^2$. In such a scenario of \emph{decoupled} observations, the function network organization of GPs allows us to infer the function form $y_1(x)$ by leveraging the ground-truth functional dependency represented by $\mathcal{G}$. To be more specific, 
\begin{equation}
g\not\perp v,\quad \forall v\in  \mathrm{An}_{g},
\end{equation}
where $v \not\perp v^\prime$ indicates that $v$ and $v^\prime$ are statistically dependent so that $P(v|v^\prime)\neq P(v)$ for any two random variables $v$ and $v^\prime$~\cite{koller2009probabilistic}. This mimics the functional dependency $h(\cdot) \not\perp h^\prime(\cdot)$, which means that a perturbation in function $h(\cdot)$ will not affect the solution space of function $h^\prime(\cdot)$; $\mathrm{An}_g=\{v|(v\rightarrow\ldots\rightarrow g)\in \mathcal{G}\}$, denoting the ancestor nodes of $g$ with a directed path starting at the ancestor node and ending at $g$. The corresponding graph with $d=\{\varnothing,d_2,d_3,d_4\}$ will have the edge $(y_i\rightarrow g)$ removed, while $g\not\perp y_1$ still holds.

Moreover, it is reasonable to assume that the approximation of the computer model's output $\{\eta_1(x),\ldots,\eta_4(x)\}$ by the four GPs varies in computational complexity. The reason is that the complexities of the chosen function forms of $\{\eta_1,\ldots,\eta_4\}$ are different. Therefore, some GPs may need more queried data from the computer model than others. Taking into account of the decomposable structure of $P_n(\mathbf{y}(x))$, we extend the graybox acquisition function by \emph{decoupled} decision-making and define the acquisition function as:  
% \begin{equation}
% \begin{split}
%         x^{new}=&\mathrm{argmax}_{i\in [1,5],x\in X}  \frac{1}{c_i}  \alpha_i(x),\\
% \alpha_{i\neq 5} (x)&= \mathbb{E}_{P_n(y_{n+1}^i|x)} [(v_{n+1,i}^\ast(x,y)+v_{n,\neq i}^\ast)]-v_n^\ast \\
% \alpha_5 (x)&= \mathbb{E}_{P_n(y_{n+1}^i|x)} [(v_{n+1}^\ast(x,y)-v_n^\ast ],
% \end{split}
% \end{equation}
% Where $v_{n,i}(x)=\mathbb{E}_{P_n(y|x) } [g(y_i)]$ is the mean of $g(y_i)$ that approximates $f_i$,  $v_{n,i}^\ast=\max_x v_{n,i} (x)$, $v_{n,\neq i}^\ast=\max_{x_{n+1}}\sum_{j\neq i}v_{n,j} (x)$ and $v_n^\ast=\mathrm{max}_x \sum_j v_{n,j}(x)$; $v_{n+1,i}^\ast(x,y)= \max_{x^\prime} v_{n+1,i} (x^\prime|x,y^i)$, so that $P_{n+1}(\mathbf{y}|x)=P_n(y_i|x,Pa_i)\prod_{j\neq i} P_n(y_j|x,Pa_j)$ where the GP $P_n(y_j|x)$ with no \hl{fantasy} data will not be updated.  
% Objective $\alpha_i (x)$ measure the expected improvement for maximizing $f_i(x)$ if obtaining $D^{n+1}$, and is scaled by constant $\lambda_i$. Objective $\alpha:=\alpha_5$ is the original Knowledge Gradient (KG) objective. 
\begin{equation}
    \begin{split}
                 &\alpha_n(x,\mathbf{z}):=\frac{1}{ \mathbf{1}^\top\mathbf{z}}\mathbb{E}_{P_n(\mathbf{y}(x))} \left[u_{n+1}^\ast(\mathbf{z},\mathbf{y}(x))\right]-u_n^\ast \\&u_{n+1}^\ast(\mathbf{z},\mathbf{y}(x)):=\max_{x'}\mathbb{E}_{P_{n+1}(\mathbf{y}'(x');\mathbf{z},\mathbf{y}(x))}[g(\mathbf{y}')]
    \end{split}
\end{equation}
where $\mathbf{z}\in \{0,1\}^4$ and $P_{n+1}(\mathbf{y}^\prime(x^\prime);\mathbf{z},\mathbf{y}(x))$ is defined as:
\begin{equation}
\prod_{i=1}^4 P_{n+1}(y_{i}^\prime(x^\prime,Pa_{i}^\prime);y_i(x,Pa_i))^{z_i}P_{n}(y_{i}^\prime(x^\prime,Pa_{i}^\prime))^{1-z_{i}}.
\end{equation}
%\hl{we need to be clear on the motivation of optimizing $z$...} 
New candidates $x^{n+1}$ is found by $\mathrm{argmax}_{x,z}\alpha_n(x,z)$. When $\mathbf{z}=\mathbf{1}$, $P_{n+1}(\mathbf{y}^\prime(x^\prime);\mathbf{z},\mathbf{y}(x))=P_{n+1}(\mathbf{y}^\prime(x^\prime);\mathbf{y}(x))$.  The proposed objective can be seen as a generalization of the KG objective, which measures expected improvement when GPs are conditioned on predicted data $(\mathbf{y}(x), x)$. Here we further accommodate the case when a subset of GPs is conditioned on predicted data. Thus, candidates $x^{n+1}$ with higher expected improvement can be expected by the proposed objective. Convergence analysis of graybox BO  using our proposed acquisition function is provided in the appendix.

Due to the combinatorial nature of $\mathbf{z}$, it is hard to optimize w.r.t. $\mathbf{z}$ directly. The most naive approach is to optimize $\alpha_n$ 
w.r.t. $x$ under every possible choice of $\mathbf{z}$, and the resulting computational cost is $2^{|\mathbf{z}|}$ times that of the non-decoupled version. For the SIQR model in this paper, $|\mathbf{z}|=4$ and the increased cost is acceptable. Besides, the non-decoupled version is 
just a special case of the decoupled version, where $\mathbf{z}=\mathbf{1}$. Therefore, we can accelerate the computation by selecting a subset. In this paper, we choose \{($\mathbf{z}: {z_1=1, z_{\neq 1}=0}),\ldots,(\mathbf{z}: {z_{|\mathbf{z}|}=1, z_{\neq |\mathbf{z}|}=0})\}\cup \{\mathbf{z}=\mathbf{1}\}$, resulting $|\mathbf{z}|+1$ times the computational cost of the non-decoupled version.

\begin{algorithm}[htb]
        \caption{Model Calibration Workflow}
        \begin{algorithmic}\label{work-flow-al}
          \REQUIRE Number of iteration $N$, Computer model $\eta(x)$
\STATE $(X,Y)\gets \left(X_{init}, \{\eta(x)\}_{x\in X_{init}}\right)$ //$X_{init}$ is a randomly initialized set of calibration parameter vectors. 
\FOR{$n=\{0,\ldots,N-1\}$}
\STATE $\mathcal{D}^n\gets (X,Y)$
\STATE $P_n(y_i(x))\gets \mathcal{N}(\mu_n^i(x),\sigma_n^i(x))$
\STATE $x^{n+1}\gets \mathrm{argmax}_x \hat{\alpha}_n(x)$.
\STATE $(X,Y) \gets X\cup \{x^{n+1}\},Y\cup \{\eta(x^{n+1})\}$
\ENDFOR
\RETURN $x^{best}\gets \mathrm{argmax}_x \hat{u}_N(x)$ 
    \end{algorithmic}
      \end{algorithm}

\begin{figure*}[t!]\vspace{-5mm}
\begin{minipage}[t]{0.36\linewidth}
\centering 
\includegraphics[width=1.0\textwidth]{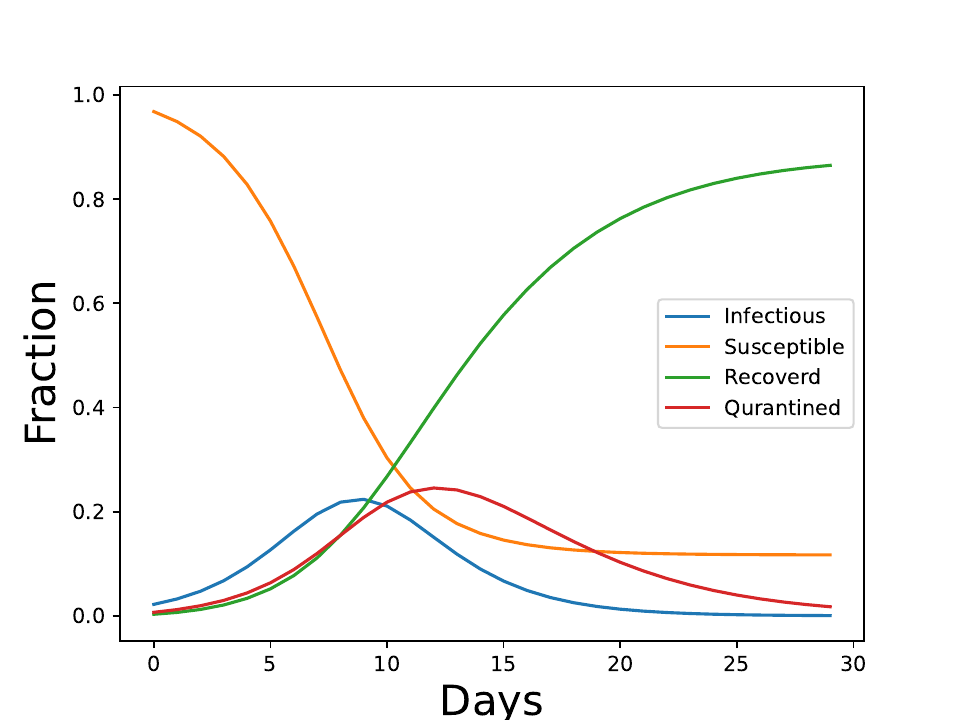}\\(a) 
\end{minipage}\hspace{-5mm}
\begin{minipage}[t]{0.36\linewidth}
  \centering
\includegraphics[width=1.0\textwidth]{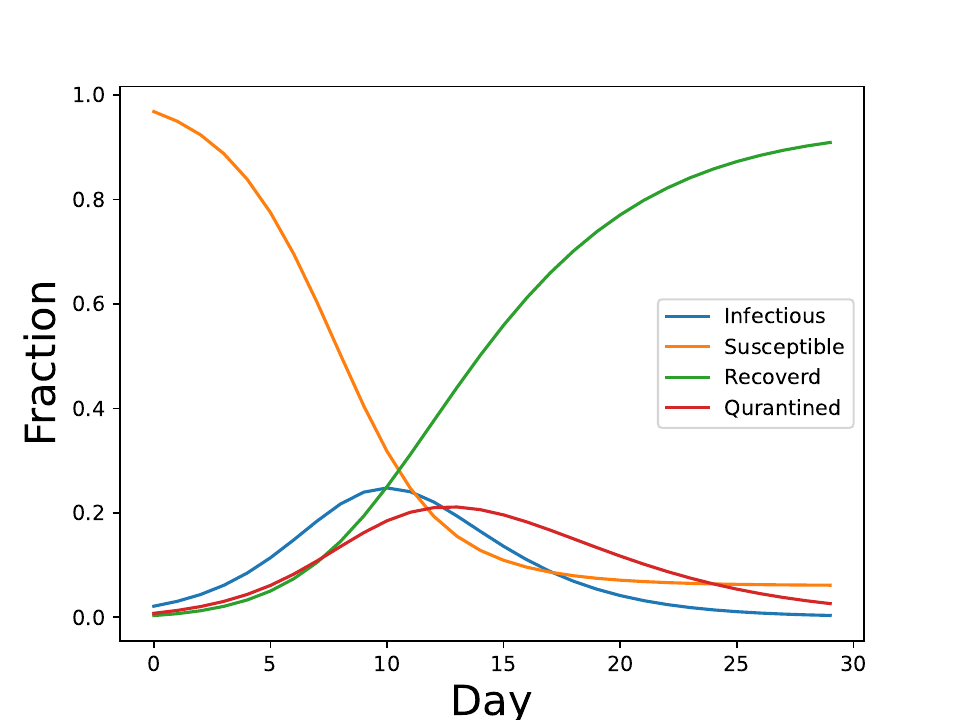}\\ (b)
\end{minipage}\hspace{-5mm}
\begin{minipage}[t]{0.36\linewidth}
  \centering
\includegraphics[width=1.0\textwidth]{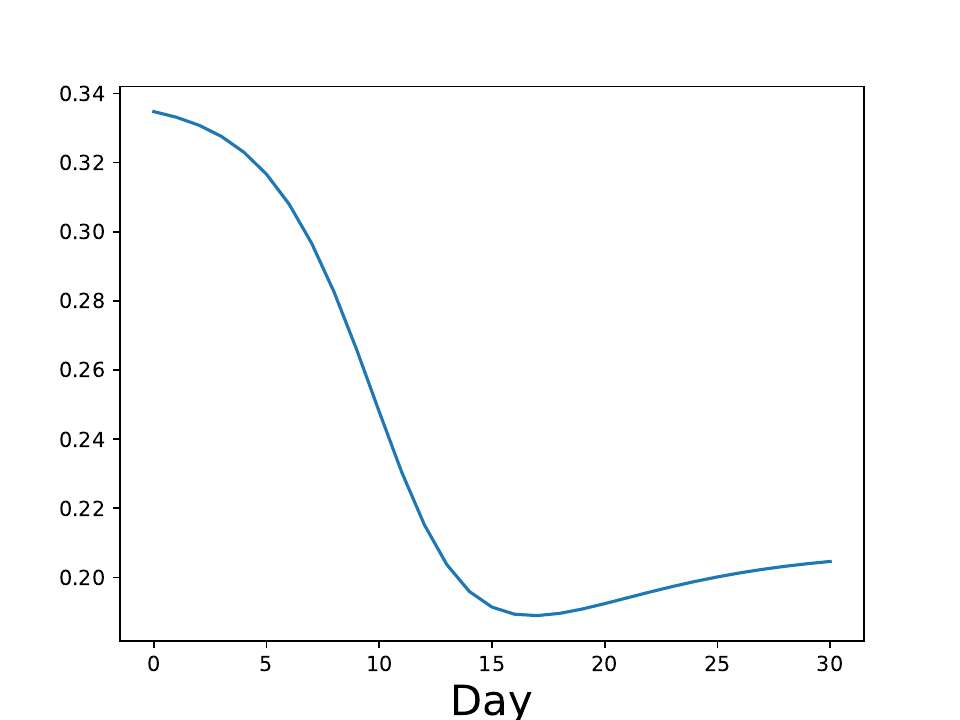}\\ (c)
\end{minipage}
%\vspace{-3mm}
\caption{The simulated ground-truth population fraction trajectories of each compartment for 30 days. (a) Trajectories from a SIQR model with linear derivative functions. (b) Trajectories corresponding to the case when derivation function $\lambda^\ast(t)$ is non-linear. (c) The trajectory of $\lambda^\ast(t)$ when it is set to be non-linear.} \label{grounth-truth}\vspace{-2mm}
\end{figure*}
\begin{figure*}[htb!]
\begin{minipage}[t]{0.36\linewidth}
\centering 
\includegraphics[width=1.0\textwidth]{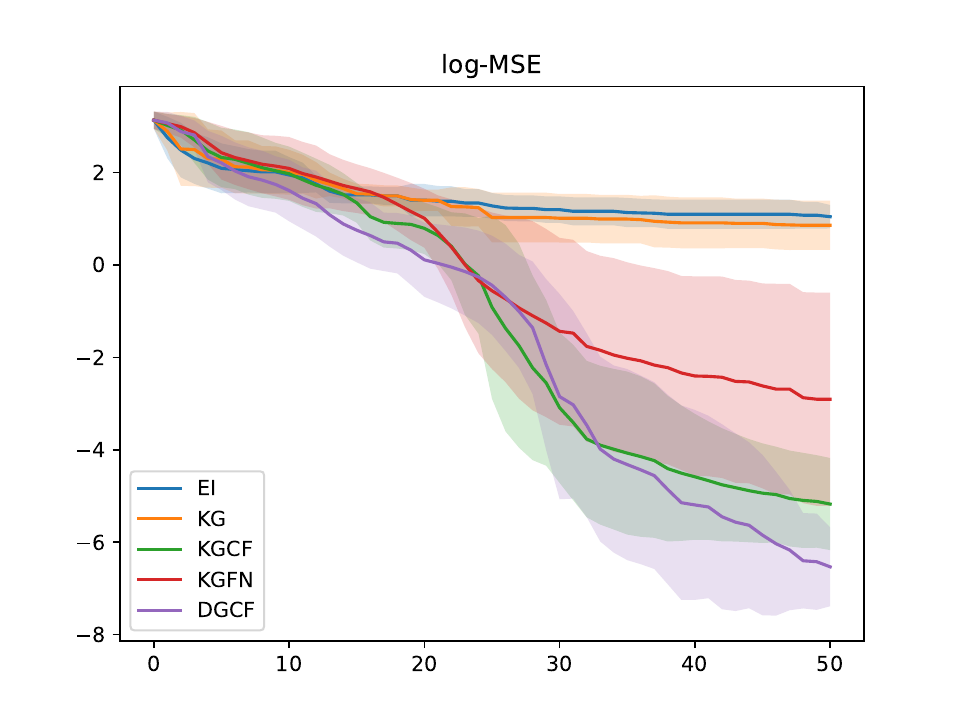}\\(a)
\end{minipage}\hspace{-5mm}
\begin{minipage}[t]{0.36\linewidth}
\centering 
\includegraphics[width=1.0\textwidth]{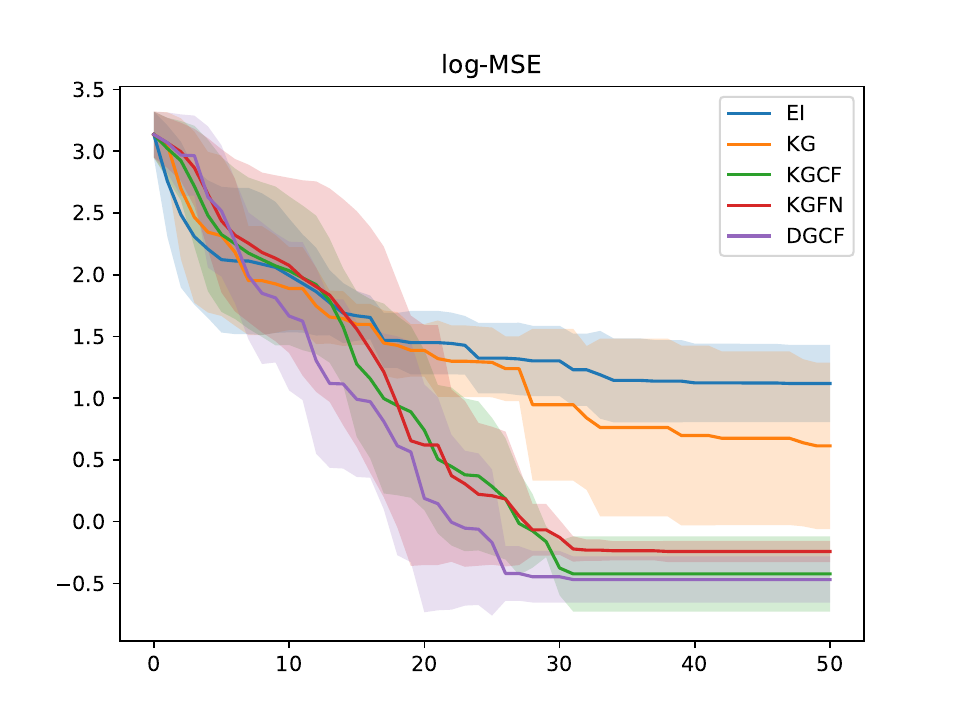}\\(b)
\end{minipage}\hspace{-5mm}
\begin{minipage}[t]{0.36\linewidth}
\centering 
\includegraphics[width=1.0\textwidth]{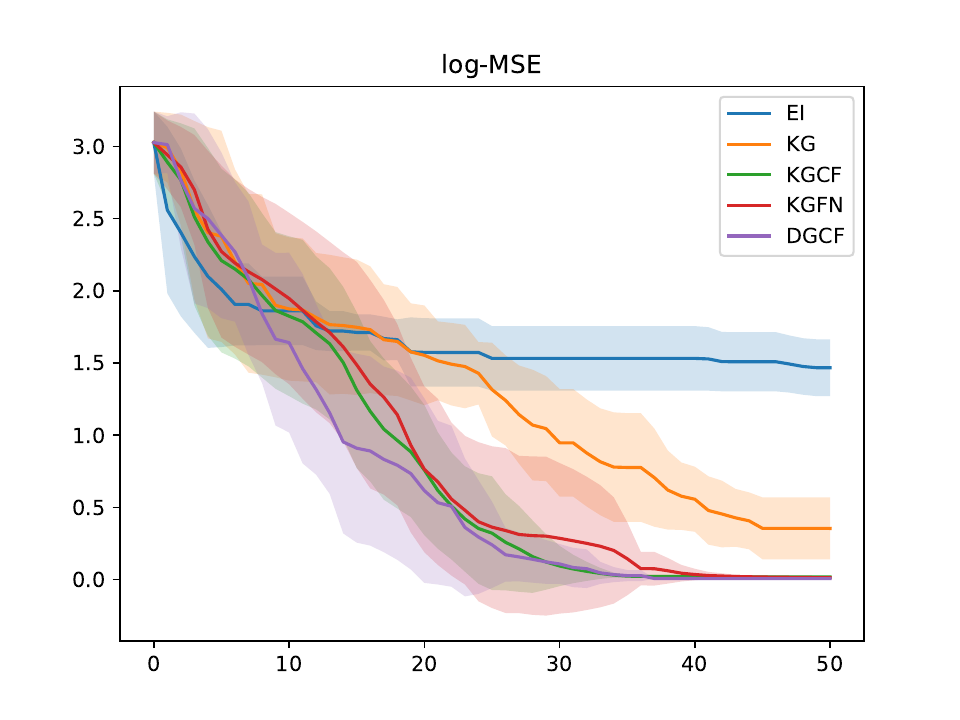}\\(c)
\end{minipage}\caption{Calibration performance. The logarithm of the MSE is shown with respect to the number of BO iterations. For the $n_{\text{th}}$ iteration, the MSE is computed as $-\sum_t f^t(x)$, where $x=\mathrm{argmax}_x \hat{u}_n(x)$. Solid lines show the mean values of five runs and the shaded regions correspond to the standard deviations around the means. (a) Performance for a linear SIQR model $\zeta$. (b) Performance for a linear $\zeta$ in the presence of noise.(c) Performance for a non-linear $\zeta$.}\label{full-obs}\vspace{-2mm}
\end{figure*}
\begin{figure*}[htb!]
\begin{minipage}[t]{0.36\linewidth}
\centering 
\includegraphics[width=1.0\textwidth]{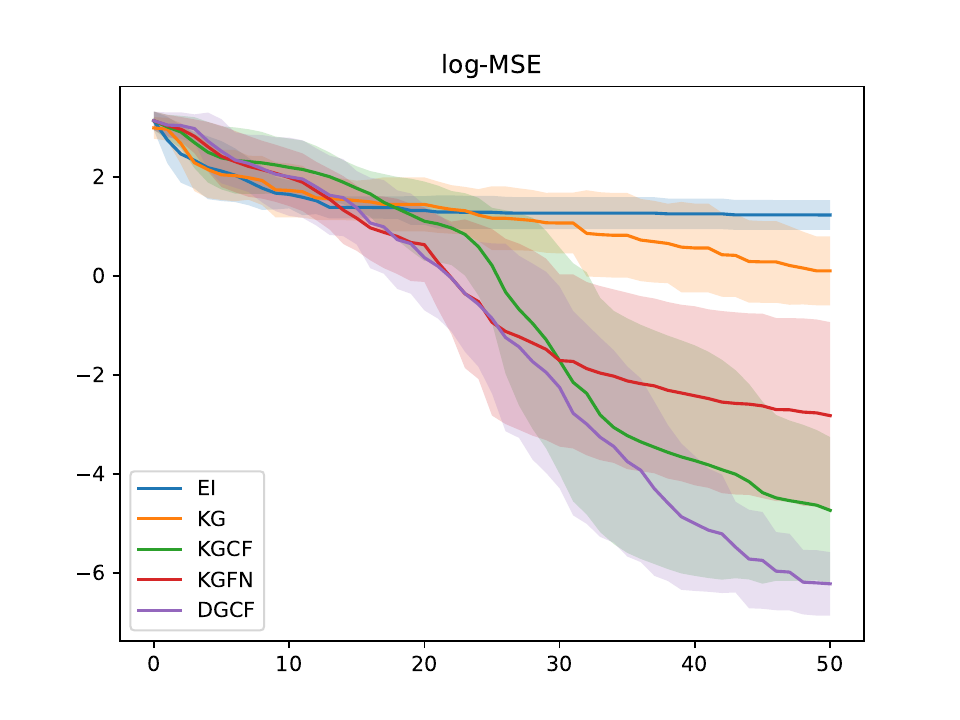}\\(a)
\end{minipage}\hspace{-5mm}
\begin{minipage}[t]{0.36\linewidth}
\centering 
\includegraphics[width=1.0\textwidth]{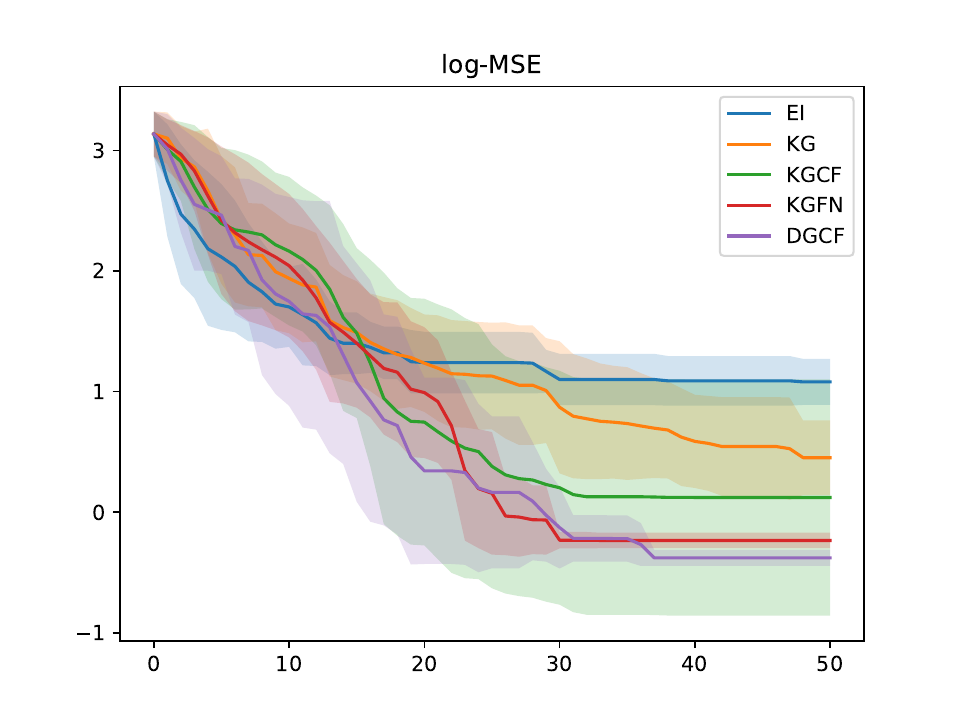}\\(b)
\end{minipage}\hspace{-5mm}
\begin{minipage}[t]{0.36\linewidth}
\centering 
\includegraphics[width=1.0\textwidth]{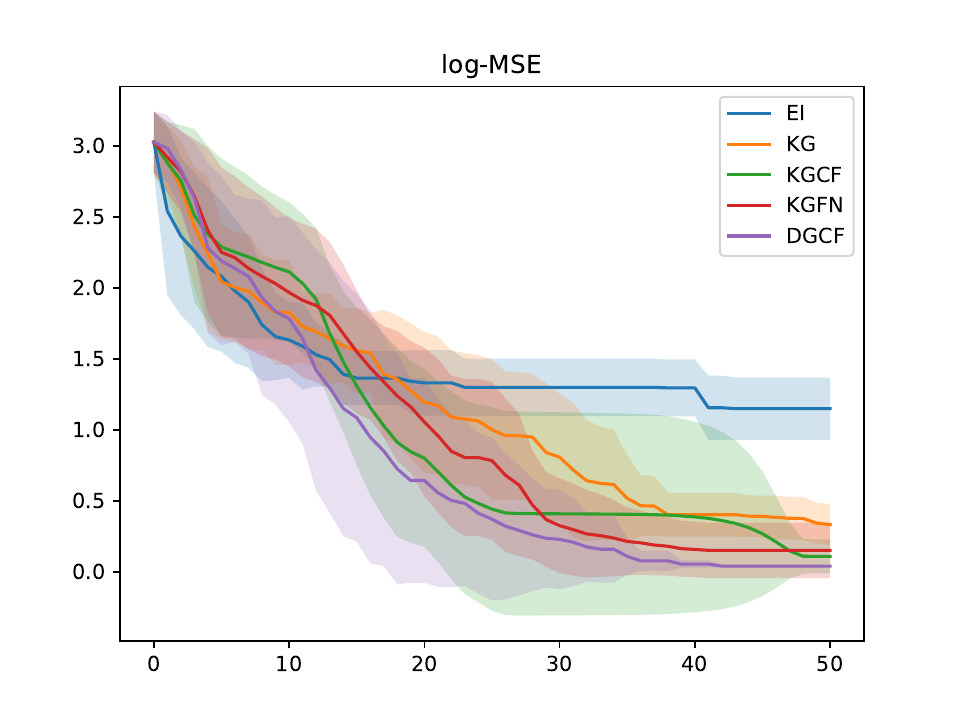}\\(c)
\end{minipage}\caption{Calibration performance runs under incomplete observations.
(a) Performance for a linear SIQR model $\zeta$. 
(b) Performance for a linear $\zeta$ in the presence of noise.
(c) Performance for a non-linear $\zeta$. In all cases, we assume that the ground-truth trajectories of the susceptible population (compartment `S') are missing.}\label{partial-obs}\vspace{-5mm}
\end{figure*}

\section{Experimental Results of Simulated Datasets}

In this section, we conduct experiments to evaluate and compare the performances of model calibration using various Bayesian Optimization (BO) methods\footnote{Code is available at \hyperlink{https://github.com/niupuhua1234/Epi-KG}{github.com/niupuhua1234/Epi-KG}.}. We consider different setups of observed epidemiological data generated from simulated ``ground-truth'' models. In our experiments, both the ground-truth model $\zeta$ and the
computer model to calibrate $\eta$ are SIQR models. Once a trial of epidemiological data $\mathbf{d}$ is simulated, the ground-truth model is no longer accessible. It is also assumed that we are only accessible to computer models' outputs. The goal is to validate the effectiveness of our graybox BO methods for model calibration in \emph{expensive} scenarios and investigate how our proposed acquisition functions can improve the performance of graybox BO-based model calibration.

To generate data based on the ground-truth SIQR model $\zeta$, we set $\beta^\ast(t)=0.9 I(t)$, $\lambda^\ast(t)=0.1I(t)$, $\delta^\ast=0.2$, $\gamma^\ast=0.2$. The model to calibrate, $\eta$, takes the same form with undetermined parameters $x$, that is, $\lambda(t,x)=x_1I(t)$, $\beta(t,x)=x_2I(t)$, $\delta=x_3$, and $\gamma=x_4$. The initial conditions for both $\zeta$ and $\eta$ are: $S(0)=0.99*N$, $I(0)=0.01*N$, $R(0)=0$ and $Q(0)=0$. We consider two scenarios representing the gap between the ground-truth model and the computer model to calibrate. The first way is to add noise to the ground-truth observations $\mathbf{d}$. The second is to have different setups of $\eta$ from those of $\zeta$. In our experiments, we set $\lambda^\ast(t)$ in $\zeta$ as a non-linear function, that is, $\lambda^\ast(t)=\log([I(t),S(t),R(t)]^\top[0.3,0.06,0.12]+1)I(t)$, while $\lambda(t)$ in $\eta$ still takes the linear form mentioned above. %\hl{if you model to calibrate is always linear but simulating model changes, you need to rewrite this paragraph completely and also make sure that using the same notation, either $I(t)$ or $I_t$, etc. } 
The corresponding plots for the 30-day simulations are shown in Figure~\ref{grounth-truth}. 
\begin{figure*}[t!]\vspace{-5mm}
\centering
\begin{subfigure}[t]{1.\textwidth}
\centering
\includegraphics[width=.33\textwidth]{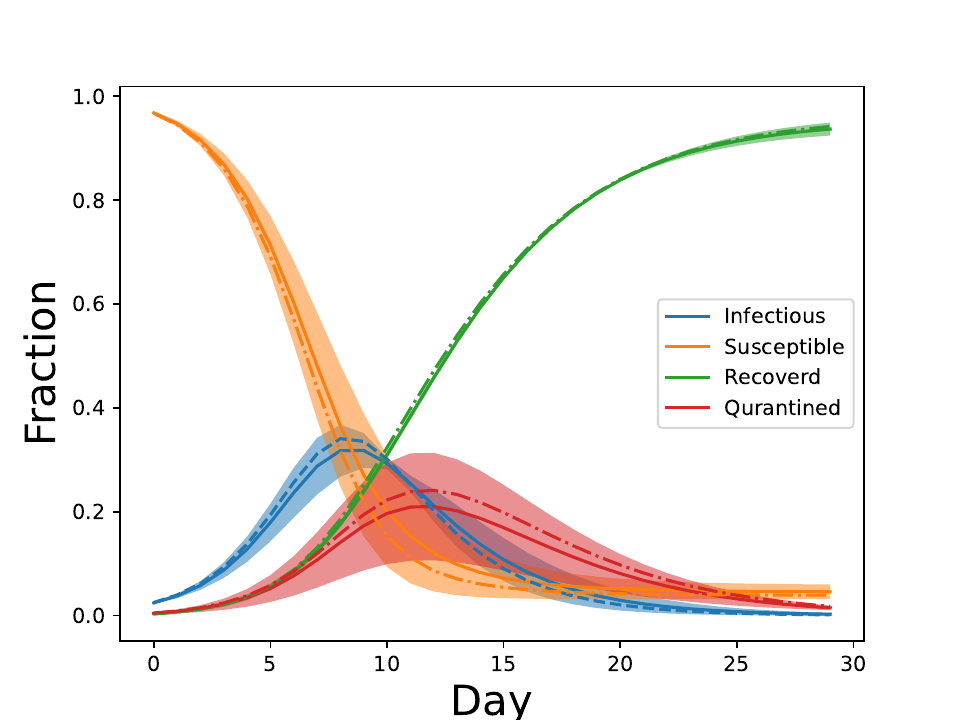}\quad\hspace{-5mm}
\includegraphics[width=.33\textwidth]{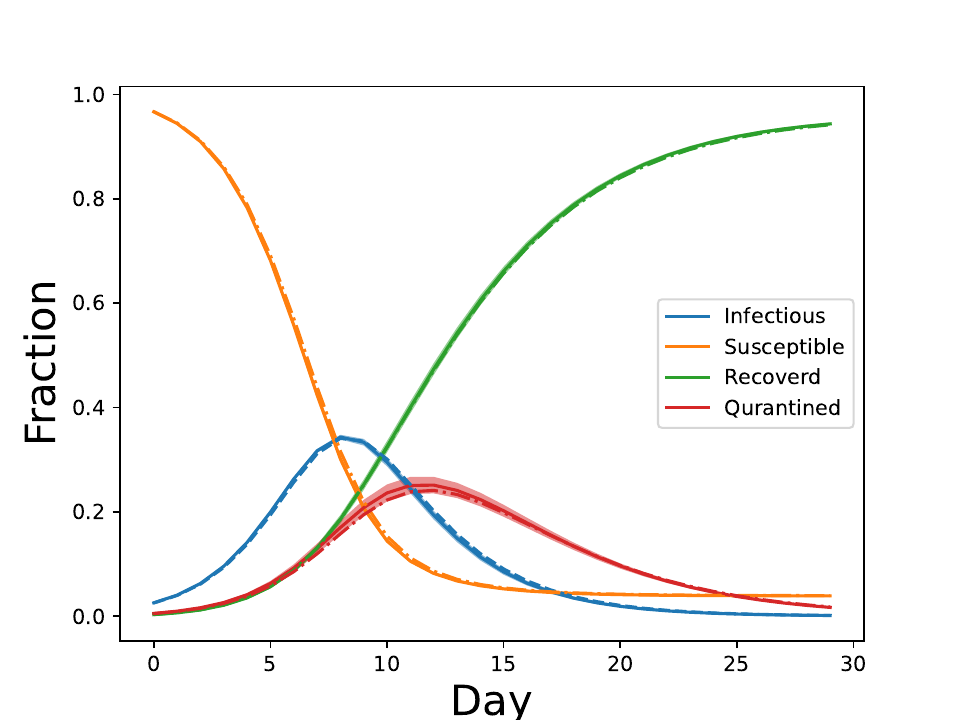}\quad\hspace{-5mm}\includegraphics[width=.33\textwidth]{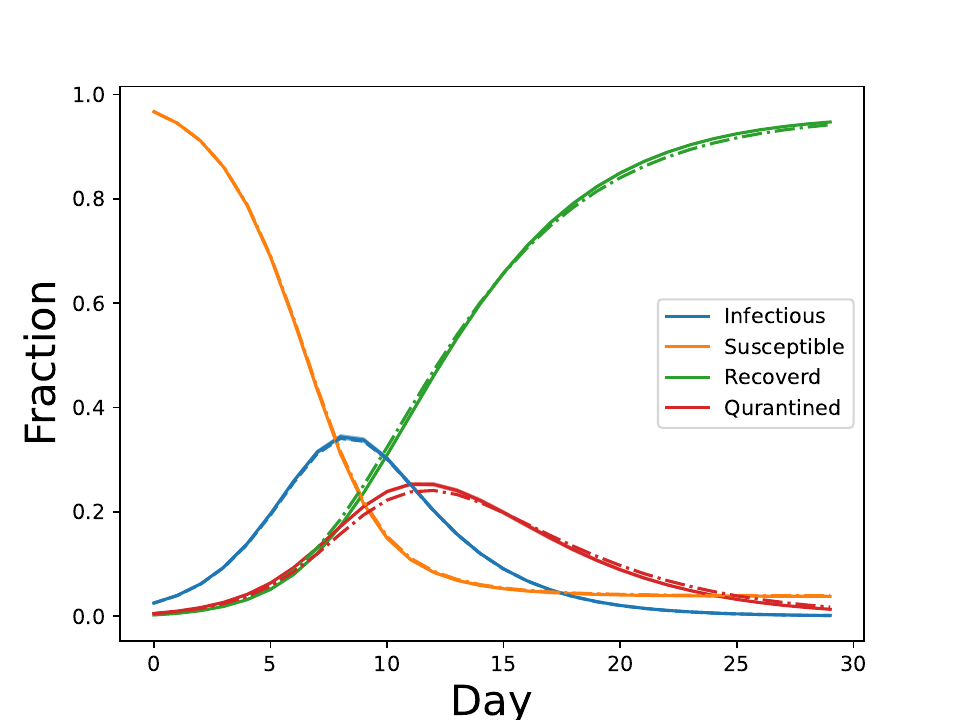}\caption*{(a)\qquad\qquad\qquad\qquad\qquad\qquad\qquad \qquad\qquad(b)\qquad\qquad\qquad\qquad\qquad\qquad\qquad\qquad\quad (c)}
\end{subfigure}\quad
\begin{subfigure}[t]{1.\textwidth}
\centering
\includegraphics[width=.33\textwidth]{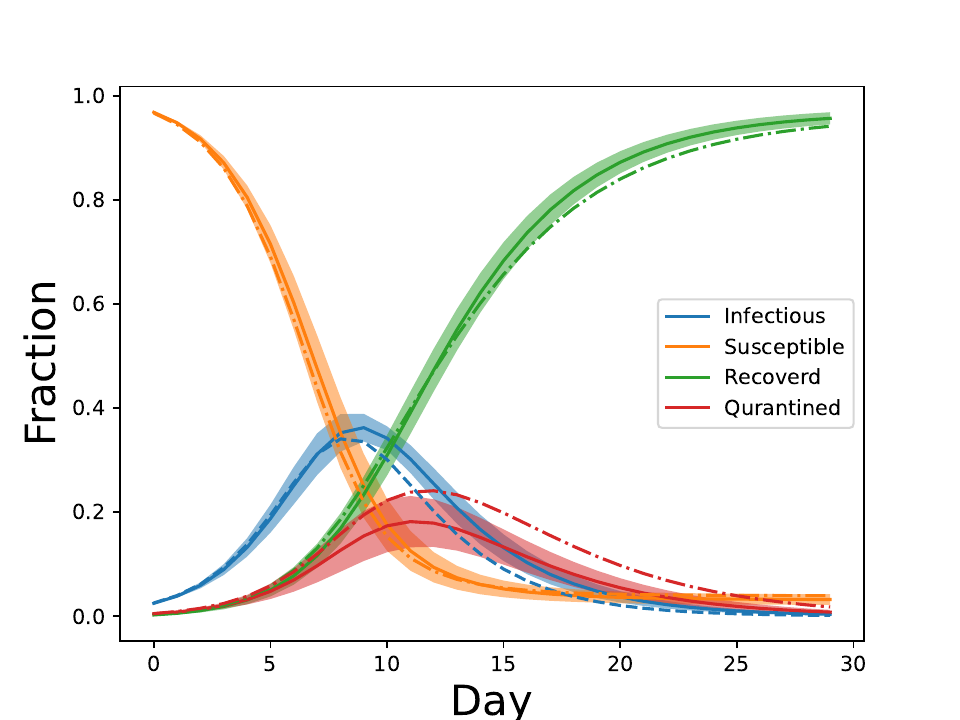}\quad\hspace{10mm}
\includegraphics[width=.33\textwidth]{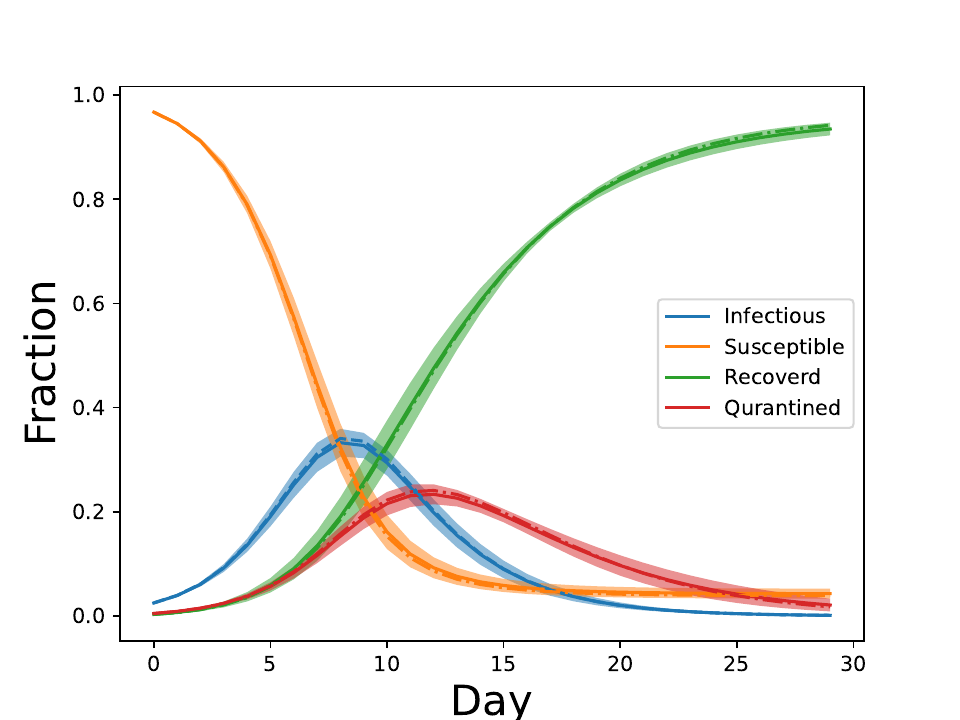}\caption*{(d)\qquad\qquad\qquad\qquad\qquad\qquad\qquad \qquad\qquad\qquad\qquad(e)}
\end{subfigure}
\caption{Population fraction trajectories  from the computer model $\eta$ under the calibration parameters $x^{best}$, calibrated by (a) KG-CF, (b) KG-FN, (c) DG-CF, (d) EI, and (e) KG, with noisy observation from a linear $\zeta$. The solid lines represent the mean trajectories simulated from the corresponding calibrated computer models of five runs and the shaded regions illustrate the standard deviation intervals around the means. The dotted lines are the ground-truth population fraction trajectories. }\label{plot-vis}\vspace{-5mm}
\end{figure*}

We compare calibration performances based on the following BO methods: (i) the standard {\it Expected Improvement (EI)} in the blackbox BO framework without integrating functional structures, (ii) the standard {\it Knowledge Gradient (KG)}, (iii) the {\it Knowledge Gradient with composite function
(KG-CF)} %leveraging partial dependency of the calibration loss on compartmental trajectories
, (iv) {\it Knowledge Gradient with function network
(KG-FN)}, and (v) the {\it Decoupled Knowledge Gradient with composite function~(DG-CF)}. All algorithms have been implemented based on the BOTorch package~\cite{balandat2020botorch} while the latter three are all graybox BO variants. In all our experiments, we have run each calibration method five times from the different random seeds, where each run starts with an initial dataset from the computer model, $\mathcal{D}_{0}=\left\{\left(\hat{\mathbf{y}}^{l}=\eta(x^{l}),~x^{l}\right)\right\}_{l=1}^{2D+1}$, %\hl{check parentheses, making sure that they are correct?}, 
where $x^{l}$ is randomly selected in $[0,1]^D$.

Experiments under all three ground-truth models are performed in complete-observation and incomplete-observation setups. In the latter setup, observations of compartment `S' are assumed to be unknown, so $d_1$ is not involved with the calibration procedure.
This helps to investigate whether the function network organization of GPs allows more reliable model calibration under incomplete (decoupled) observation data $\mathbf{d}$ by leveraging the ground-truth functional dependency.

\subsection{Complete Observations}

%\hl{revise:} 
The experimental results under the complete-observation setting are shown in Figure~\ref{full-obs}. In all these experiments, it can be observed that our graybox BO methods, including KG-CF, perform significantly better than the blackbox BO methods, EI and KG. This demonstrates that integrating knowledge of $g$ does help improve the calibration performance. The performance of KG-FN is slightly worse than KG-CF. %Nevertheless, their performances become close in the incomplete-observation setting as detailed in the next subsection, where KG-FN can outperform KG-CF when the observations are noisy.
This phenomenon can be ascribed to the following reasons. With complete observations, all the updating of GPs,~$\{y_1,\ldots,y_4\}$ that approximate $\{\eta_1,\ldots,\eta_4\}$ are well-informed, making the information propagating via introducing statistical dependency less important. Besides, the function network organization of GPs make the acquisition estimator $\hat{\alpha}$ more challenging to calculate reliably. As shown in~\eqref{KG-obj}, inputs of each GP include not only $x$ but also random samples of $Pa$, which makes the estimated $\hat{\alpha}$ unstable due to the stochastic nature of inputs, and consequently increases the difficulty of finding the optimal point of $\hat{\alpha}(x)$ in each BO iteration. Last but not least, our proposed DG-CF performs better than KG-CF. 
This validates that the decoupled acquisition can better inform decision-making by conditioning on the predicted data of a subset of GPs, rather than the whole set of GPs.

\subsection{Incomplete Observations}
The experimental results under the incomplete-observation setting are shown in Figure~\ref{partial-obs}. Again, our graybox BO methods perform significantly better than the blackbox BO methods. It should be noted that KG-FN can outperform KG-CF when the observations are noisy. This shows that integrating functional dependency does help to inform the calibration process, while the performance gain is not significant due to the accompanied optimization difficulty that we mentioned above. Finally, our proposed DG-CF performs better than KG-CF, validating again the advantages of the decoupled acquisition function. %performances of KG-CF and KG-FN, and

In Figure~\ref{plot-vis}, we also visualize population fraction trajectories generated from the computer models under the calibrated parameters derived by all methods in the setting with noisy and incomplete observations from a linear SIQR model. 
It can be observed that all methods can provide non-trivial calibration results, showing that our BO-based calibration methods have a promising potential for epidemiological dynamic model calibration, when the computed model is \emph{expensive}. Among all the BO variants, graybox BO variants KG-FN and DG-CF render the mean trajectories with smaller bias to the ground-truth ones as well as smaller variance, compared to KG-CF, EI, and KG. This again confirms that utilizing the functional structure and decoupled acquisition functions can help better inform decision-making during BO iterations.

% \begin{figure*}[htb!]
% \begin{minipage}[t]{0.36\linewidth}
% \centering 
% \includegraphics[width=1.0\textwidth]{Plot-0_05-partial-KGCF.pdf}\\(a)
% \end{minipage}\hspace{-5mm}
% \begin{minipage}[t]{0.36\linewidth}
% \centering 
% \includegraphics[width=1.0\textwidth]{Plot-0_05-partial-KGFN.pdf}\\(b)
% \end{minipage}\hspace{-5mm}
% \begin{minipage}[t]{0.36\linewidth}
% \centering
% \includegraphics[width=1.0\textwidth]{Plot-0_05-partial-DGFN.pdf}\\(c)
% \end{minipage}\\
% \begin{minipage}[t]{0.36\linewidth}
% \centering
% \includegraphics[width=1.0\textwidth]{Plot-0_05-partial-EI.pdf}\\(d)
% \end{minipage}
% \begin{minipage}[t]
% {0.36\linewidth}
% \centering
% \includegraphics[width=1.0\textwidth]{Plot-0_05-partial-KG.pdf}\\(e)
% \end{minipage}
% \caption{Simulated population fraction trajectories \textcolor{blue}{(y-axis)} from the computer model, where  calibration parameters are found by (a) KG-CF, (b) KG-FN, (c) DG-CF, (d) EI and (e) KG \textcolor{blue}{in the presence of noise}. \hl{(Describe what are shown by y axis, solid line, shaded region. Provide quick summary (or main conclusion) of the results shown.)}\textcolor{blue}{The dotted lines represent the mean trajectories simulated from the computer model and the shaded region illustrated the standard deviation around the means. The solid lines are the ground-truth population fraction trajectories. It can be observed that KG-FN and DG-CF render mean trajectories with a smaller bias to the ground-truth ones and lower variance, compated to KG-CF, EI, and KG. }}\label{plot-vis}
% \end{figure*}

\section{Experimental Results of Real-world Datasets}

To further confirm the effectiveness of our proposed methods in real-world scenarios, we apply them to calibrate the SIQR model $\eta$ w.r.t. observe population trends during the COVID-19 pandemic. We use the data collected by the Johns Hopkins University Center for Systems Science and Engineering, via a R data package \emph{covid19.analytics}~\cite{1}. We focus on the infectious populations of U.S. and U.K. For each country, there are 365 temporal observations of the infectious populations collected from June 1st, 2020, to
June 1st, 2021.

We consider three variants of the SIQR models, denoted by $\eta_S$, $\eta_W$, and $\eta_{NN}$. Calibrated Model $\eta_S$ takes the same form as $\eta$, described in the previous experiment section. For model $\eta_W$, $\lambda(t)$ adopts a non-linear form, $\lambda(t)=\log ([I(t),S(t),R(t)]^\top[x_{1,1},x_{1,2},x_{1,3}]+1)I(t)$, while other components remain the same as that of $\eta$.  Last but not least, $\lambda(t)$ of model $\eta_{NN}$ is set to be $f_{NN}([I(t),S(t),R(t)]; x_{NN})I(t)$. Here, $f_{NN}$ is a neural network with parameters $x_{NN}$ and three latent layers. The detailed information about the dimension and activation function in each layer of $f_{NN}$ is summarized as follows: Input Layer--${( 3\times 20, \text{Tanh})}$; Output Layer--${( 20\times 1, \text{Sigmoid})}$; Latent Layers--${\{( 20\times 20, \text{Tanh})_i}\}_{i=1}^3$; 
The other components of $\eta_{NN}$ follow those of $\eta$. This calibrated model setup is motivated by the common adoption of neural ODE for modeling the dynamics of complex systems, making it meaningful to validate proposed methods w.r.t. more flexible neural ODE models.

Unlike the simulated experiments, we know the initial state of each population and the total population. In the real-world scenarios, these values should be carefully selected. Since we only have the observations of the infectious population, the simplest choice is to set $I(0)$ equal to the first-day's infectious observation, denoted by $\widehat{I}(0)$; and set the total population $N$ to the population of each country. However, during the COVID-19 pandemic, most countries issued policies to discourage the spread of the disease. For example, people were suggested to work at home. The true total population that should be taken into account is much smaller than the population of each country. In light of the above reason and calibration quality, we set $I(0)=x_5$ and $N=x_6$ as two additional parameters in the calibrated model, and consider the ranges $10^{-1} \widehat{I}(0)\leq x_5\leq 10\widehat{I}(0)$ and  $10 \widehat{I}(0)\leq x_6\leq 10^3\widehat{I}(0)$. In addition, the calibration of $\eta_{NN}$ is divided into two stages. In the first stage, we calibrate $\eta_{S}$ to the real-world observations. In the second stage, we replace $\lambda(t)=x_1I(t)$ by $f_{NN}([S(t),Q(t),R(t)];x_{NN})I(t)$, obtaining an uncalibrated $\eta_{NN}$. Then, we adopt the adjoint gradient method~\cite{chen2018neural} to compute gradients efficiently and stochastic gradient descent to optimize parameters of  $f_{NN}(\cdot ;x_{NN})$ and $[\beta(t,x_2),\delta(t,x_3),\gamma(t,x_4)]$. Values of $x_{NN}$ are randomly initialized, and values of $x_{2:6}$ found in the first stage are used as the initial values for the second stage. The batch size, the number of total iterations, the learning rate are set to $30$, $2000$ and  $5\times 10^{-4}$.  The reason for this two-stage calibration is as follows. First, 
it should be noted that the goal of model calibration is to efficiently identify the value of key parameters rather than the optimization of all parameters. Compared to the parameters of $f_{NN}$, which collectively govern the $\lambda(t)$, $x_{2:6}$ act as hyper-parameters, each of which controls an individual component of the SIQR model. Therefore, properly setting the values or ranges $x_{2:6}$ is crucial for the calibration quality.  
Second, while neural ODEs are typically optimized by stochastic gradient descent, the optimization procedure requires \emph{complete observations} of all population compartments, or the marginal formulation that only depends on the observed population compartments. To be more specific, we are required to randomly select a window $[t_0, t_{29}]\subset [0, T]$ during optimization. Then $\eta_{NN}$ is used to simulate sub-trajectories for this window, starting from the ground-truth observation $(\widehat{I}(t_0),\widehat{S}(t_0),\widehat{Q}(t_0),\widehat{R}(t_0))$. However, in our experiment, only $\widehat{I}(t_0)$ is known, making the stochastic gradient descent infeasible. Performing normal gradient descent usually leads to poor optimization performance, where we always simulate trajectories from  $(\widehat{I}(0), N-\widehat{I}(0), 0, 0)$. Since we only care about the calibration w.r.t. the infectious observations, a preferable choice is to use $(\widehat{I}(t_0), S(t_0),Q(t_0),R(t_0))$, where $ S(t_0),Q(t_0),R(t_0)$ are simulated from $\eta_S$. The corresponding calibration workflow is presented in Algorithm~\ref{work-flow-al2}.

\begin{algorithm}[t]
        \caption{Two-Stage Model Calibration Workflow}
        \begin{algorithmic}\label{work-flow-al2}
          \REQUIRE  $N$,  $\eta_{NN}(x)$, $\eta_{S}(x)$
\STATE $x^{1st}\gets$Algorithm-\ref{work-flow-al}($N$,  $\eta_{S}(x)$)
\STATE $\{S(t),Q(t),R(t)\}_{t=0}^{T-1}\gets\eta_{S}(x^{1st})$
\STATE $\mathcal{D}\gets \{\widehat{I}(t),S(t),Q(t),R(t)\}_{t=0}^{T-1}$
\STATE $x\gets[x_{NN}, x_{2:6}^{1st}]$
\STATE Optimize $\eta_{NN}(x)$ w.r.t. $x_{NN}; x^{1st}_{2:4}$ by stochastic gradient descent, based on $\mathcal{D}$, and obtain $x^{2nd}$
\RETURN $x^{2nd}$
    \end{algorithmic}
    
      \end{algorithm}

\begin{figure*}[htb!]\vspace{-5mm}
\begin{minipage}[t]{0.36\linewidth}
\centering 
\includegraphics[width=1.0\textwidth]{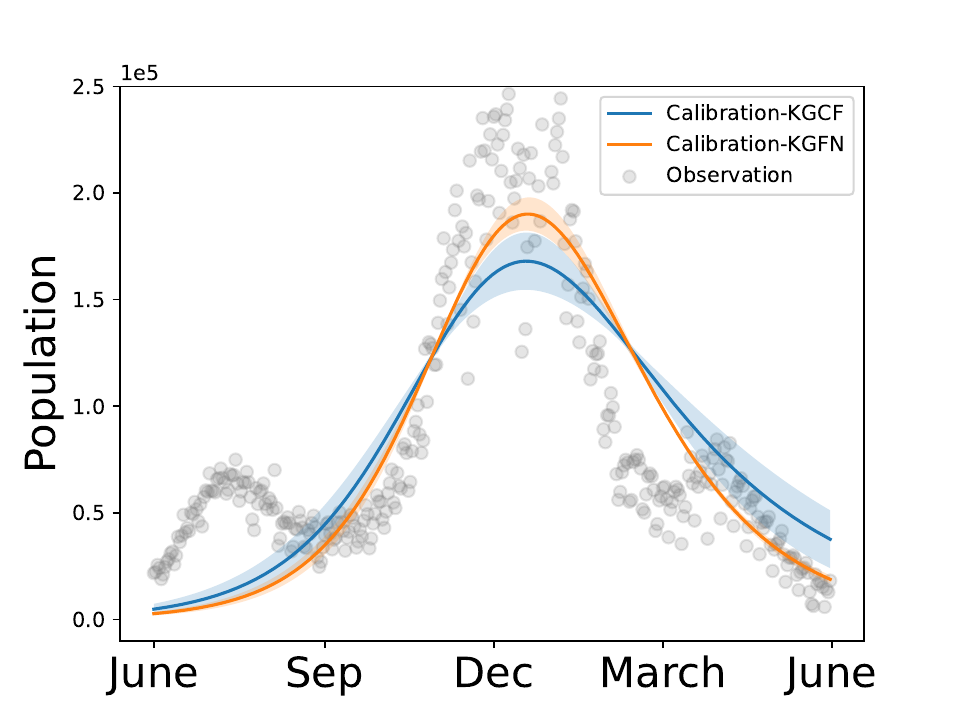}\\(a)
\end{minipage}\hspace{-5mm}
\begin{minipage}[t]{0.36\linewidth}
\centering 
\includegraphics[width=1.0\textwidth]{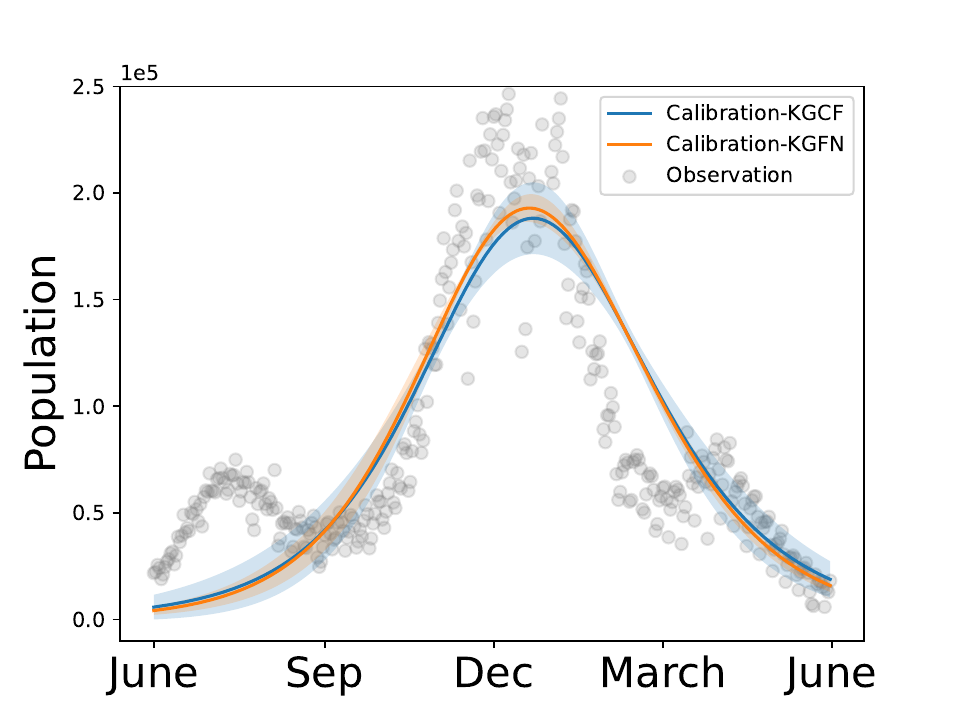}\\(b)
\end{minipage}\hspace{-5mm}
\begin{minipage}[t]{0.36\linewidth}
\centering 
\includegraphics[width=1.0\textwidth]{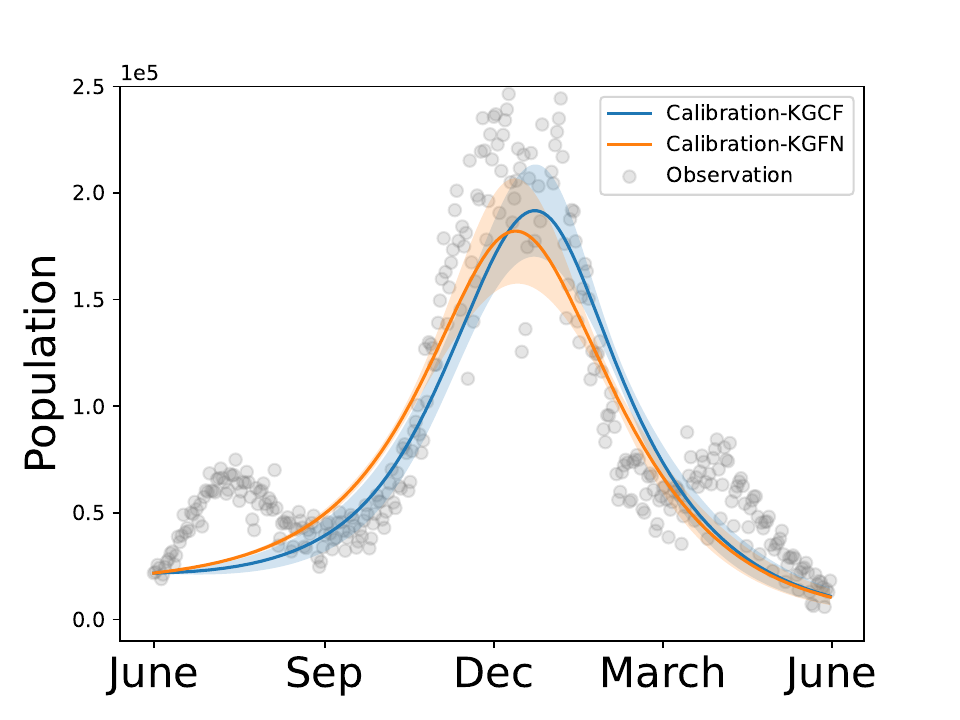}\\(c)
\end{minipage}\caption{Trajectories of the U.S. infectious population from the computer models (a) $\eta_S$, (b) $\eta_W$, and (c) $\eta_{NN}$ under the calibrated parameters, calibrated by KG-CF and KG-FN. The solid lines represent the mean trajectories simulated from the corresponding calibrated computer models of five runs and the shaded regions illustrate the standard deviation intervals around the means. The scattered points are the ground-truth observations of the infectious population.
}\label{real-obs}\vspace{-5mm}
\end{figure*}

\begin{figure*}[htb!]
\begin{minipage}[t]{0.36\linewidth}
\centering 
\includegraphics[width=1.0\textwidth]{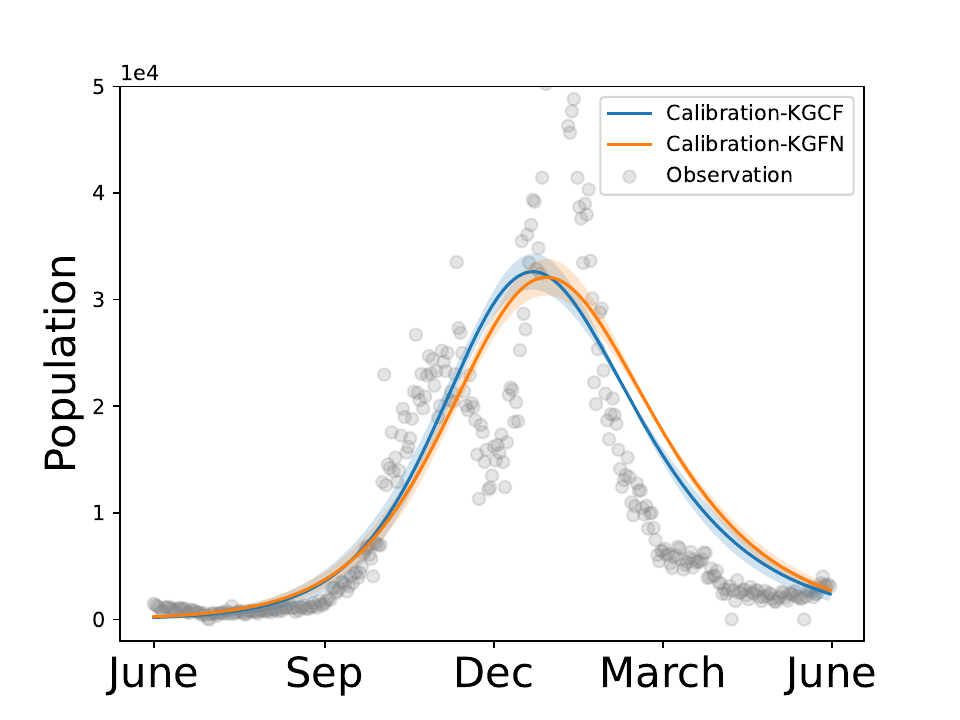}\\(a)
\end{minipage}\hspace{-5mm}
\begin{minipage}[t]{0.36\linewidth}
\centering 
\includegraphics[width=1.0\textwidth]{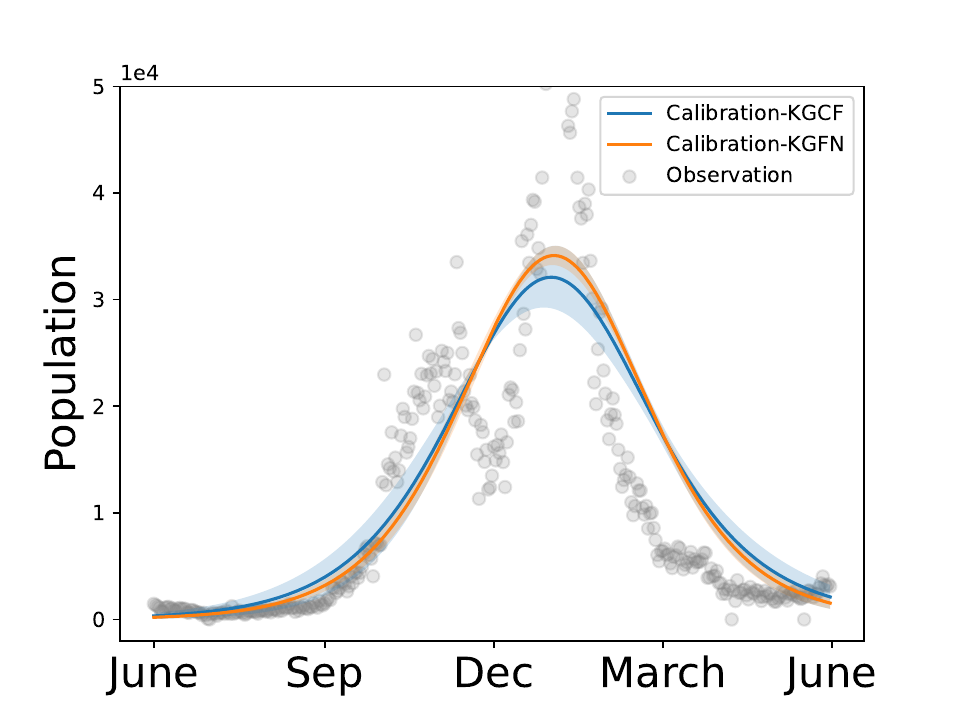}\\(b)
\end{minipage}\hspace{-5mm}
\begin{minipage}[t]{0.36\linewidth}
\centering 
\includegraphics[width=1.0\textwidth]{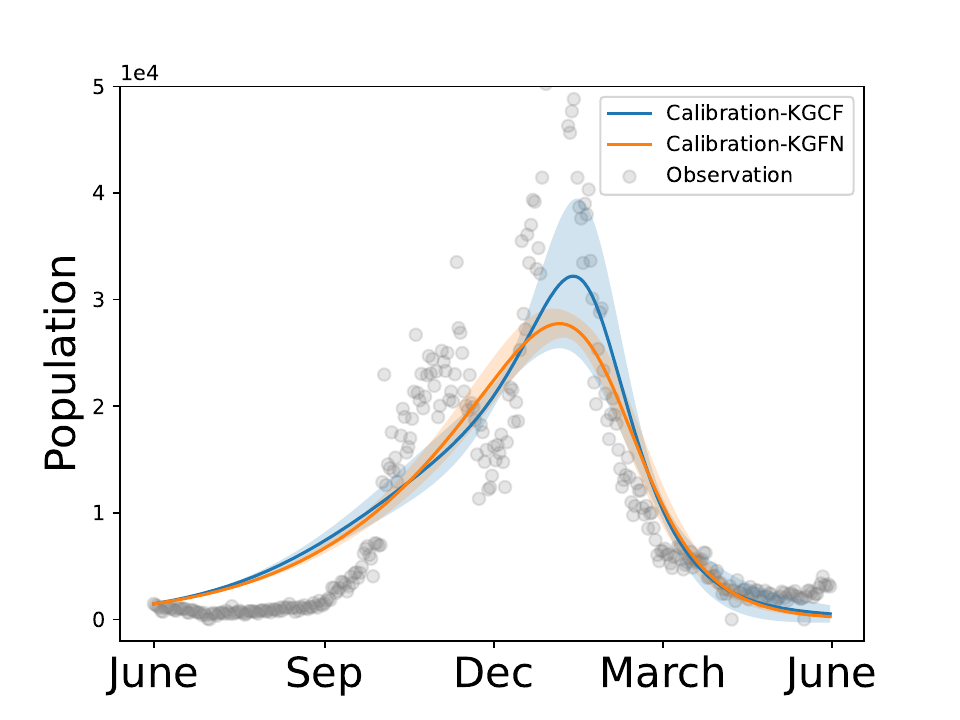}\\(c)
\end{minipage}\caption{Trajectories of the U.K. infectious population from the computer models (a) $\eta_S$, (b) $\eta_W$, and (c) $\eta_{NN}$ under the calibrated parameters, calibrated by KG-CF and KG-FN. The solid lines represent the mean trajectories simulated from the corresponding calibrated computer models of five runs and the shaded regions illustrate the standard deviation intervals around the means. The scattered points are the ground-truth observations of infectious population. 
}\label{real-obs2}\vspace{-0mm}
\end{figure*}

\begin{table}[h!]%\resizebox{\textwidth}{15mm}{}
\centering
\begin{tabular}{lll}
\hline
Method & \multicolumn{1}{c}{US ($\times 10^6\downarrow$)} & \multicolumn{1}{c}{UK} ($\times 10^4\downarrow$) \\ \hline
KGCF & $11.3087\pm2.6014$ & $44.6875\pm4.1894$ \\
KGFN & $8.0117\pm0.5123$ & $45.0617\pm3.6110$ \\ \hline
\end{tabular}
\caption{The mean and standard deviation of the smallest MSEs achieved by KG-CF and KG-FN on the U.S. and U.K. COVID-19 datasets, from five model calibration runs with randomly initialized $\mathcal{D}_0$.}\label{table1}
\end{table}

In Figures~\ref{real-obs} and~\ref{real-obs2}, we visualize the comparison of predicted trajectories from the three types of computer models, calibrated by KG-CF and KG-FN based on infectious observations of the U.S. and U.K. respectively. We note that, since only the infectious population compartment is observed, the DG-CF implementation is the same as KG-CF. From these figures, all calibrated models by either KG-CF or KG-FN successfully capture the trends of the true observations. Compared to KG-CF, the calibration performances of KG-FN for all three model setups are generally better with lower variance as shown in Table~\ref{table1}. With nonlinear $\lambda(t)$, $\eta_W$ fits better to the observations than $\eta_S$. Enhancing $\lambda(t)$ by a neural network and applying the two-stage calibration process, the calibrated models emulate infection population curves that fit the observations better as expected. While the calibrated curves under all model setups exhibit single modes~\cite{odagaki2021exact} due to the limited model expressiveness of the SIQR model family, the capability of our proposed calibration methods is well validated by multiple setups. It can be expected that calibrating more expressive models like ABMs~\cite{aleta2021modeling}. will render better fitting to the true observations.  

\section{Conclusion \& Future Work}
In this work, we proposed epidemiological model calibration methods based on the BO framework. %based on BO, which aims at model calibration at \emph{expensive} cases. 
To improve calibration performance, we formulated the calibration problem as a customized graybox BO task, where expert knowledge about functional dependency and calibration performance metric function is integrated into the acquisition function. Furthermore, we proposed a decoupled acquisition function, which further exploits the
decomposable nature of the functional structure. We conducted experiments using simulated data under three types of ground-truth models and two observation types to validate the model calibration performance of our proposed BO-based methods. Within a small number of BO iterations ($\leq 50$), the proposed graybox BO methods can achieve good performance, leading to lower MSEs and faster convergence (in terms of BO iterations) compared to methods based on standard BO. To further confirm the capability of our methods to real-world problems, we calibrated three types of models to COVID-19 data. The experimental results demonstrate the efficacy of our BO-based methods for enabling the calibration of \emph{expensive} computer models.
While the experimental results show that utilizing ground-truth functional dependency can help the calibration process, the resulting formulation of the acquisition function can lead to optimization difficulties in practice, impairing the overall calibration performance. Thus, future work will focus on further investigation of the function network organization of GPs to achieve better performance. Besides, in the current work, the function network is limited to a single system, where each node corresponds to an element within that system. In a more general case, there can be multiple sub-systems interacting with one another, with each node in the network representing an entire sub-system. Our future investigation will consider extending the proposed calibration methods to accommodate this more general case. 
%\hl{(Note: can we add some discussion relevant to ABM if possible?)}.
%\textcolor{blue}
Finally, agent-based epidemiological models %are also popular choices for epidemic dynamic modeling, where the
have a much larger number of critical parameters %is usually much larger 
than compartmental models. Along with their significant computational complexity, the underlying functional dependency are also more complex, making their calibration tasks more challenging. Efficient calibration of ABMs for epidemiological modeling may require developing and validating different approximate solutions for BO acquisition functions, which remains an open problem for future research. \\

\appendix
We simplify the notations of $\mathbb{E}_{P_{n}(y_i(x,Pa_i))}[\cdot]$ and $\mathbb{E}_{P_{n}(Pa_i(x))}\mathbb{E}_{P_{n}(y_i(x,Pa_i))}[\cdot]$ as $\mathbb{E}_{n}[\cdot]$ and $\overline{\mathbb{E}}_{n}[\cdot]$ ( $\mathbb{E}_{n,(x,Pa_i)}[\cdot]$ and  $\overline{\mathbb{E}}_{n,x}[\cdot]$ to emphasize the dependency relationships). Accordingly, $\mathbb{E}_{P_{n}(y_i(x,Pa_i))}\big[\mathbb{E}_{P_{n+1}(y'_i(x',Pa'_i);y_i(x,Pa_i)}[\cdot]\big]$ and  $\mathbb{E}_{P_n(Pa_i(x))}\mathbb{E}_{n,(x,Pa_i)}\big[\mathbb{E}_{P_{n+1}(Pa'_i(x'))}\mathbb{E}_{n+1,(x',Pa'_i)}[\cdot]\big]$ can be denoted as  $\mathbb{E}_{n}[\mathbb{E}_{n+1}[\cdot]]$ and $\overline{\mathbb{E}}_{n}[\overline{\mathbb{E}}_{n+1}[\cdot]]$.

Both $\mathbb{E}_{n}[g(y_i(x,Pa_i))]$ and $\mathbb{E}_{n}[\max g(y_i)]~(= \mathbb{E}_{n,x}[\max_{x'} g(y_i(x',Pa_i))])$ are martingales in that $\mathbb{E}_{n}[\mathbb{E}_{n+1}[g(y'_i)]]=\mathbb{E}_{P_n(y'_i)}[g(y'_i)]=\mathbb{E}_n[g(y_i)]$, and $\mathbb{E}_{n}[\mathbb{E}_{n+1}[\max g(y_i)]]=\mathbb{E}_{n}[\max g(y_i)] $ by the law of total expectations~\cite{kallenberg1997foundations,bect2019supermartingale}. 
% Since $Pa'_i\sim P_{n+1}$ is independent of $y_i\sim P_{n}$, $\mathbb{E}_{P_n(Pa_i(x))}\mathbb{E}_{n,(x,Pa_i)}\left[\mathbb{E}_{P_{n+1}(Pa'_i(x'))}\mathbb{E}_{n+1}[\cdot]\right]=\mathbb{E}_{P_n(Pa_i(x))}\mathbb{E}_{P_{n+1}(Pa'_i(x'))}\mathbb{E}_{n,(x,Pa_i)}\left[\mathbb{E}_{n+1}[\cdot]\right]$.
Furthermore, $\overline{\mathbb{E}}[g(y_i)]$ and $\overline{\mathbb{E}}[\max g(y_i)]$ are also martingales. The reason is as follows: Conditioning on $Pa_i\sim P_n$, $Pa'_i\sim P_{n+1}$ is independent of $y_i\sim P_{n}$ as $Pa_i$ is the common parent node within the dependency structure $y_i \leftarrow Pa_i\rightarrow Pa'_i$~\cite{koller2009probabilistic}. Therefore, $\overline{\mathbb{E}}_{n}\left[\overline{\mathbb{E}}_{n+1}[\cdot]\right]=\mathbb{E}_{P_n(Pa_i(x))}\mathbb{E}_{P_{n+1}(Pa'_i(x'))}\left[\mathbb{E}_{n}\mathbb{E}_{n+1}[\cdot]\right]=\mathbb{E}_{P_n(Pa_i(x))}\mathbb{E}_{P_{n+1}(Pa'_i(x'))}\mathbb{E}_{n}[\cdot]=\overline{\mathbb{E}}_{n}[\cdot]$.
\begin{lemma}{Let $G_n^i(x)= \overline{\mathbb{E}}_{n,x}[ \overline{\mathbb{E}}_{n+1}[g(y'_i(x'_\ast,Pa'_i))]-\overline{\mathbb{E}}_n[g(y_i(x_\ast,Pa_i))]$, where $x_\ast=\mathrm{argmax}_{x}\overline{\mathbb{E}}_{n}[g(y_i(x,Pa_i))]$, and $x'_\ast=\mathrm{argmax}_{x'}\overline{\mathbb{E}}_{n+1}[g(y'_i(x',Pa'_i))]$. Then $G_{n}^i(x)\geq 0$.\label{G-}}
\end{lemma}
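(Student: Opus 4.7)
The plan is to reduce the claim to the classical non-negativity of a knowledge-gradient-type quantity, by combining the optimality of $x'_\ast$ with the martingale (tower) identity that the authors have just set up in the paragraph immediately preceding the lemma.

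First I would exploit the defining property of $x'_\ast$: since $x'_\ast = \mathrm{argmax}_{x'} \overline{\mathbb{E}}_{n+1}[g(y'_i(x',Pa'_i))]$, evaluating the same objective at any other specific point in the domain — in particular at the deterministic point $x_\ast$, which is fixed by the time-$n$ posterior alone — can only give a smaller or equal value. This yields the pointwise bound
\begin{equation}
\overline{\mathbb{E}}_{n+1}\bigl[g(y'_i(x'_\ast,Pa'_i))\bigr] \;\geq\; \overline{\mathbb{E}}_{n+1}\bigl[g(y'_i(x_\ast,Pa'_i))\bigr].
\end{equation}

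Next I would apply the outer operator $\overline{\mathbb{E}}_{n,x}$ to both sides; monotonicity of expectation preserves the inequality. On the right-hand side, I can then invoke the tower identity $\overline{\mathbb{E}}_{n}[\overline{\mathbb{E}}_{n+1}[\,\cdot\,]] = \overline{\mathbb{E}}_n[\,\cdot\,]$, which the excerpt has just established for $g(y_i)$, with $x_\ast$ held fixed. This collapses the nested outer expectation into $\overline{\mathbb{E}}_n[g(y_i(x_\ast,Pa_i))]$, and subtracting this from the left-hand side produces precisely $G_n^i(x)\geq 0$.

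The only subtle point, and hence the main thing to verify, is that the tower identity applies cleanly after substituting the fixed choice $x_\ast$. Because $x_\ast$ is determined only by information available at step $n$, it is non-random with respect to both $\overline{\mathbb{E}}_{n,x}$ and $\overline{\mathbb{E}}_{n+1}$, and the conditional-independence structure $y_i \leftarrow Pa_i \rightarrow Pa'_i$ that drove the excerpt's martingale derivation is untouched by fixing the coordinate $x=x_\ast$. I would resist the temptation to apply a martingale step with $x'_\ast$ substituted directly, since $x'_\ast$ is itself $P_{n+1}$-measurable (it depends on the hypothetical observation $y_i(x,Pa_i)$) and so cannot be pulled inside $\overline{\mathbb{E}}_{n,x}$; going via the pointwise lower bound at the deterministic $x_\ast$ sidesteps this issue entirely and uses only the plain tower property already in hand.
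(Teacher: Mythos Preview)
Your proposal is correct and follows exactly the paper's own argument: the paper also uses the optimality of $x'_\ast$ to get $\overline{\mathbb{E}}_{n+1}[g(y'_i(x_\ast,Pa'_i))]\leq\overline{\mathbb{E}}_{n+1}[g(y'_i(x'_\ast,Pa'_i))]$, then applies the outer expectation $\overline{\mathbb{E}}_{n,x}$ and the tower identity to conclude $0=\overline{\mathbb{E}}_{n}[\overline{\mathbb{E}}_{n+1}[g(y'_i(x_\ast,Pa'_i))]]-\overline{\mathbb{E}}_n[g(y_i(x_\ast,Pa_i))]\leq G_n^i(x)$. Your additional remarks on the measurability of $x_\ast$ versus $x'_\ast$ are correct and make explicit what the paper leaves implicit.
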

\begin{proof}
By definitions, $\overline{\mathbb{E}}_{n+1}[g(y'_i(x_{\ast},Pa'_i))] \leq \overline{\mathbb{E}}_{n+1}[g(y'_i(x'_\ast,Pa'_i))]$. Thus,
$0=\overline{\mathbb{E}}_{n}[\overline{\mathbb{E}}_{n+1}[g(y'_i(x_{\ast},Pa'_i))]]-\overline{\mathbb{E}}_n[g(y_i(x_\ast,Pa_i))]\leq G_n^i(x)$. 
\end{proof}

According to the definition of $G_n^i(x)$, we further define 
$H_{n}^i=\mathbb{E}_{n,x_\ast}[\max g(y_i)]-\mathbb{E}_n[ g(y_i(x_\ast,Pa_i))]$, where $x_\ast=\mathrm{argmax}_{x}\overline{\mathbb{E}}_{n}[g(y_i(x,Pa_i))]$. We also denote $J_n(x)=\mathbb{E}_{n,x}[H_{n+1}]$. %\hl{confusing here! x follows G(x)?}
It can be observed that  $H_{n}^i\geq 0$ and $J_{n}^i(x)\geq 0$.  Since $\mathbb{E}_n[\max g(y_i)]$ is a martingale, $H_{n}^i-J_{n}^i(x)=\mathbb{E}_{n,x}[\max \mathbb{E}_{n+1}[g(y'_i)]]-\max \mathbb{E}_n[y_i]=G_{n}^i(x)$. 

\begin{assumption}{ 
We make the following assumptions:
\begin{itemize}
    \item  $x\in \mathbb{X}$ and $\mathbb{X}$ is a compact metric space (e.g. ($\mathbb{R}^n,\|\cdot\|_2$)). and the sample path of $y_{i\in \{1,\ldots,4\}}$ is continuous.
    \item $\eta$ is in the state space of $\mathbf{y}\sim P_n$ for $n\geq 1$.
    %\item For some integer $n_0$, the design sequence $\{x^{n}\}_{n\geq 1}$ satisfies $\forall n>n_0: \alpha_n(x^n)\geq \sup_x \alpha_n(x)$.
\end{itemize}
}\label{assp}
\end{assumption}

\begin{lemma}[Lemma B.3 in ~\cite{astudillo2021bayesian}] Supposing \emph{Assumption~\ref{assp}} holds and taking $n\rightarrow \infty$, then $m_n^i$ converge to some $m_\infty^i$
almost surely (a.s.) and uniformly on $\mathbb{X}$, and $\sigma_n^i$ converges to some $\sigma_\infty^i$ a.s., and uniformly on $\mathbb{X}\times \mathbb{X}$.\label{gp-c}
\end{lemma}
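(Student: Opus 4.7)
The plan is to first establish almost-sure pointwise convergence via elementary probabilistic tools, and then upgrade to uniform convergence by exploiting the compactness of $\mathbb{X}$ and the regularity of the Matérn kernel. For fixed $x \in \mathbb{X}$, the posterior mean $m_n^i(x) = \mathbb{E}[y_i(x) \mid \mathcal{H}_n]$ with $\mathcal{H}_n$ the $\sigma$-algebra generated by the first $n$ observations is a Doob martingale, bounded in $L^2$ since the Matérn kernel is finite on its diagonal. The martingale convergence theorem then yields $m_n^i(x) \to m_\infty^i(x) := \mathbb{E}[y_i(x) \mid \mathcal{H}_\infty]$ a.s.\ and in $L^2$. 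For the posterior variance, $\sigma_n^i(x,x)$ is non-negative and non-increasing in $n$ (conditioning only shrinks variance), hence converges monotonically to $\sigma_\infty^i(x,x)$; the off-diagonal entries then converge pointwise by Cauchy--Schwarz, bounded by the diagonal.

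For uniform convergence of $\sigma_n^i$ on $\mathbb{X}\times\mathbb{X}$, the Matérn kernel is continuous and the Gaussian conditioning formulas preserve continuity, so each $\sigma_n^i$ (and the limit $\sigma_\infty^i$) is continuous on the compact set. Dini's theorem applied to the monotonically decreasing sequence $\sigma_n^i(x,x)$ of continuous functions yields uniform diagonal convergence. Because $\sigma_n^i - \sigma_\infty^i$ is pointwise positive semidefinite (as the $n'\to\infty$ limit of the PSD matrix $\sigma_n^i - \sigma_{n'}^i$ for $n \le n'$), Cauchy--Schwarz gives
\begin{equation*}
|\sigma_n^i(x,x') - \sigma_\infty^i(x,x')|^2 \le \bigl(\sigma_n^i(x,x) - \sigma_\infty^i(x,x)\bigr)\bigl(\sigma_n^i(x',x') - \sigma_\infty^i(x',x')\bigr),
\end{equation*}
which transfers the uniform diagonal convergence to the full product space.

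For uniform convergence of $m_n^i$, the strategy is Arzelà--Ascoli: combine a.s.\ pointwise convergence on a countable dense subset of $\mathbb{X}$ with a.s.\ equicontinuity of the family $\{m_n^i\}_n$. The sample paths of the prior GP are a.s.\ uniformly continuous on the compact $\mathbb{X}$ with random modulus $\tilde\omega(\delta) := \sup_{\|x-x'\|\le\delta}|y_i(x) - y_i(x')|$, which is bounded and satisfies $\tilde\omega(\delta) \to 0$ a.s.\ as $\delta \to 0$. Jensen's inequality gives $\sup_{\|x-x'\|\le\delta}|m_n^i(x) - m_n^i(x')| \le \mathbb{E}[\tilde\omega(\delta)\mid\mathcal{H}_n]$, and Doob's $L^2$ maximal inequality bounds $\sup_n\mathbb{E}[\tilde\omega(\delta)\mid\mathcal{H}_n]$ in $L^2$-norm by $2\sqrt{\mathbb{E}[\tilde\omega(\delta)^2]}$. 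Passing to a subsequence $\delta_k \downarrow 0$ fast enough and invoking Borel--Cantelli (leveraging the Hölder-type decay of $\tilde\omega$ provided by the Matérn kernel) yields a.s.\ equicontinuity, and hence a.s.\ uniform convergence of $m_n^i$ to $m_\infty^i$.

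I expect the equicontinuity step to be the main obstacle: coupling the random pathwise modulus of continuity of the prior sample paths with the uniformity across $n$ of the posterior-mean increments must be handled without circularity. The enabling facts are that conditioning cannot enlarge the pathwise increments of $y_i$ (in the sense that the Gaussian increment variance only shrinks), and that the Matérn kernel supplies a quantitative decay rate for $\tilde\omega(\delta)$ strong enough to promote Doob's in-probability control to the Borel--Cantelli-driven almost-sure equicontinuity required by Arzelà--Ascoli.
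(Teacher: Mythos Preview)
The paper does not prove this lemma at all; it is imported verbatim as Lemma~B.3 from \cite{astudillo2021bayesian} and used as a black box in the proof of Theorem~1. Your proposal therefore goes well beyond what the paper supplies, and there is no in-paper proof to compare against.

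As a self-contained argument your route is the standard one and is essentially the strategy used in the cited source (and in \cite{bect2019supermartingale}, on which that source relies): Doob martingale convergence for $m_n^i$, monotonicity plus Dini for the diagonal of $\sigma_n^i$, Cauchy--Schwarz on the PSD difference $\sigma_n^i-\sigma_\infty^i$ to propagate to the off-diagonal, and Arzel\`a--Ascoli for uniformity of the mean. Two places deserve tightening. First, Dini's theorem requires the pointwise limit $\sigma_\infty^i(x,x)$ to be continuous, and your justification ``Gaussian conditioning formulas preserve continuity'' only gives continuity of each $\sigma_n^i$, not of their pointwise limit; you need a separate argument (e.g.\ equicontinuity of the family $\{\sigma_n^i\}_n$, which follows from the explicit posterior-variance formula and the uniform continuity of the prior kernel on the compact $\mathbb{X}\times\mathbb{X}$). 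Second, in the equicontinuity step for $m_n^i$ you correctly identify the obstacle, but the sketch does not yet close: to run Borel--Cantelli you need a quantitative rate $\mathbb{E}[\tilde\omega(\delta_k)^2]$ summable in $k$, and this is exactly where the H\"older-type modulus supplied by the Mat\'ern kernel must be invoked explicitly rather than alluded to.
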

\begin{theorem}Supposing \emph{Assumption~\ref{assp}} holds and taking $n\rightarrow \infty$, the decoupled acquisition function $\alpha_n(x)$ converge to $0$ a.s.. % Let $x_\ast^n=\max_{x\in \mathbb{X}}\sum_i \mu_n^i(x)$ and $g$ be a identity function~(so that $f$ is approximated by $\mathbf{y}$ directly), then $\mu(x_\ast^n)\rightarrow \max_xf(x)$ a.s..
\end{theorem}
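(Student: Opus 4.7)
The plan is to exhibit $\alpha_n$ as the one-step decrement of a non-negative, bounded supermartingale and then apply Doob's decomposition; summability of the predictable part then forces the per-step increments to vanish almost surely. All of the per-component ingredients $H_n^i$, $J_n^i$, $G_n^i$ are already assembled in the preamble, so my task reduces to lifting them to the full function-network level and then reducing the decoupled $\mathbf{z}$ case to the fully coupled one via Lemma~\ref{gp-c}.

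First I would define the multi-component potential
$H_n := \overline{\mathbb{E}}_n\!\bigl[\max_{x'} g(\mathbf{y}(x'))\bigr] - u_n^{*},$
where $u_n^{*} = \max_{x} \mathbb{E}_{P_n(\mathbf{y}(x))}[g(\mathbf{y})]$ and $\overline{\mathbb{E}}_n$ is the nested expectation over the function network. Non-negativity is immediate from $\max_{x'} g(\mathbf{y}(x')) \geq g(\mathbf{y}(x_\ast^n))$. The preamble argument (nested tower applied along $\mathcal{G}$) already shows that $\overline{\mathbb{E}}_n[\max g(\mathbf{y})]$ is a martingale, while a short Jensen computation shows $u_n^{*}$ is a submartingale: $\mathbb{E}_n[u_{n+1}^{*}] \geq \max_{x}\mathbb{E}_n[u_{n+1}(x)] = \max_{x} u_n(x) = u_n^{*}$, using the per-$x$ martingale property of $u_n(x)$. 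Hence $H_n$ is a martingale minus a submartingale, i.e.\ a supermartingale. Paralleling the identity $H_n^i - J_n^i(x) = G_n^i(x)$, a direct expansion gives $H_n - \mathbb{E}_n[H_{n+1}] = \mathbb{E}_n[u_{n+1}^{*}(\mathbf{1},\mathbf{y}(x_n))] - u_n^{*} = \alpha_n(x_n,\mathbf{1})$; non-negativity of this increment for any $x_n$ follows from Jensen applied to $\max_{x'}$, so the supermartingale property holds under any sampling rule.

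Next I would invoke Lemma~\ref{gp-c}: under Assumption~\ref{assp}, $\mu_n^i$ and $\sigma_n^i$ converge uniformly on the compact $\mathbb{X}$, so $g(\mathbf{y})$ is uniformly bounded under $P_n$ and therefore $H_n$ is a bounded non-negative supermartingale. Doob's theorem then yields $H_n \to H_\infty$ a.s.\ and in $L^1$, and the Doob decomposition $H_n = M_n - A_n$ produces a non-decreasing predictable process $A_n = \sum_{k<n}\alpha_k(x_k,\mathbf{1})$ with $A_n \to A_\infty < \infty$ a.s.; since all summands are non-negative, the tail of this convergent series must vanish, giving $\alpha_n(x_n,\mathbf{1}) \to 0$ a.s.

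Finally, for an arbitrary decoupling vector $\mathbf{z}\in\{0,1\}^{4}$, the partially updated posterior $P_{n+1}(\mathbf{y}';\mathbf{z},\mathbf{y}(x))$ differs from the fully updated $P_{n+1}(\mathbf{y}';\mathbf{1},\mathbf{y}(x))$ only in leaving the components with $z_i=0$ at their step-$n$ posterior. Lemma~\ref{gp-c} makes this discrepancy vanish uniformly on $\mathbb{X}$, since $\mu_{n+1}^i-\mu_n^i$ and $\sigma_{n+1}^i-\sigma_n^i$ both tend to $0$ uniformly; combined with the bounded normalization factor $1/(\mathbf{1}^\top\mathbf{z})\leq 1$, this yields $|\alpha_n(x_n,\mathbf{z}_n)-\alpha_n(x_n,\mathbf{1})| \to 0$ almost surely, transferring the coupled-case conclusion to the decoupled one. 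I expect the main obstacle to be precisely this last uniform control of the partially updated posterior against the fully updated one, since the inner maximizer $x'$ may drift across $\mathbb{X}$ as $n$ grows; the compactness of $\mathbb{X}$ together with the uniform convergence of the GP statistics is what permits exchanging the limit with $\max_{x'}$ and closes the argument.
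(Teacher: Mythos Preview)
Your argument is essentially correct and reaches the same conclusion, but it follows a genuinely different route from the paper's own proof, and the comparison is worth making explicit.

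The paper never works at the ``full network'' level. Instead it exploits the additive decomposition $g(\mathbf{y})=\sum_i g_i(y_i)$ and runs the entire telescoping argument \emph{componentwise}: for each $i$ it shows $\sum_n \overline{\mathbb{E}}[G_n^i(x^{n+1})]\leq H_0^i<\infty$, whence $G_n^i\to 0$. The decoupled case then falls out algebraically, because under additivity the acquisition factors as $\alpha_n(x,\mathbf z)=\tfrac{1}{\mathbf{1}^\top\mathbf z}\sum_i z_i G_n^i(x)$, so $0\le\alpha_n\le\sum_i G_n^i\to 0$ with no further analysis. There is no appeal to Lemma~\ref{gp-c} to compare partially and fully updated posteriors; the $\mathbf z$ dependence is handled by the structure of $g$ alone.

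Your approach instead builds a single multi-output potential $H_n$, uses the Doob decomposition to obtain summability of $\alpha_n(x_n,\mathbf 1)$ (up to the normalization constant $4$, which you should track but which does not affect the conclusion), and then transfers the result to arbitrary $\mathbf z$ via the uniform convergence of $\mu_n^i,\sigma_n^i$ from Lemma~\ref{gp-c}. This is more general, since it does not rely on $g$ being separable across components and would survive, e.g., a nonlinear coupling in the metric. The price is the last step you correctly flag as the main obstacle: controlling $\sup_{x'}|u_{n+1}(x';\mathbf z)-u_{n+1}(x';\mathbf 1)|$ uniformly requires the compactness of $\mathbb X$ together with continuity of the GP sample paths, whereas the paper's additive route bypasses this entirely. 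Both arguments are valid; the paper's is shorter and more elementary for this particular $g$, yours is the one that would generalize.
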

\begin{proof}
First of all, it is clear that $G_n^i$ and $H_n^i$ are both continuous w.r.t. $P_n$ and $P_{n+1}$. Thus, by \emph{Lemma~\ref{gp-c}}, $G_n\rightarrow G_{\infty}$ and  $H_n\rightarrow H_{\infty}$ when $P_{n}\rightarrow P_{\infty}$. 
    % For $n\geq 1$, $\mathbb{Z}^H_{n}$ where $H_{n}^i(x)=0$ is equal to $\mathbb{Z}^G_{n}$, where $G_{n}^i(x)=0$. This can be proved following the step f) of Theorem 4.8 in paper~\cite{bect2019supermartingale}.
\\\indent
 Denoting $K^i=H_n^i-H_{n+1}^i$,  we have $\overline{\mathbb{E}}_{n,x}[K_n^i]=H_n^i-J_n^i(x)=G_n^i(x)$. By the assumption of $\eta$, we construct $G_n$ based on a sample trajectory 
$\{\eta^l(x^l),x^l\}_l^{n}$ and have:
%$\{y^l(x^l)(\sim P_l),x^l\}_l^{n-1}$, so the assumption means that it is possible to construct a instance of $G_n$ of $P_n$ based on  
 \begin{align*}\sum_{n=0}^{N-1}&\overline{\mathbb{E}}_{1:n,x^{1:n}}[G_n^i(x^{n+1})]=\sum_{n=0}^{N-1}\overline{\mathbb{E}}_{1:n+1,x^{1:n+1}}[K_n^i]\\=&
\overline{\mathbb{E}}_{1:N,x^{1:N}}[\sum_{n=0}^{N-1} K_n^i]=H_0^i-\overline{\mathbb{E}}_{1:N,x^{1:N}}[H_N^i]\leq H_0^i<\infty,
 \end{align*}
 where $P_0$ is generated from a given $\mathcal{D}_0$, $P_{n(\geq 1)}$ is generated from $\mathcal{D}_0\cup\{(y_i^l(x),x^l)\}_{l=1}^{n}$ with $y^l_i(x)\sim P_{l}$. Taking $N\rightarrow \infty$, we have $\sum_{n=0}^\infty\mathbb{E}[G^i_x(x^{n+1})]\leq \infty$. Since $G_n^i(x)\geq 0$ by \emph{Lemma~\ref{G-}}, the inequality implies $G_\infty^i(x)=0$ a.s..
\\\indent
 By the definition of $\alpha_n$, we have $\alpha_n(x)=\max_{\mathbf{z},x}\frac{1}{1^\top\mathbf{z}}\sum_{i} (H_i-z_iJ_n^i(x)-(1-z_i)H_i)=\max_{\mathbf{z},x}\frac{1}{1^\top\mathbf{z}}\sum_{i}z_iG_n^i(x)$, and $0 \leq \alpha_n(x^n)\leq \sum_i G_n^i(x^n)$. Therefore, $\lim_{n\rightarrow \infty}\alpha_n(x^n)= 0$ a.s..
\end{proof}

\bibliography{main}
\bibliographystyle{plain}

\end{document}